\documentclass{article}

\usepackage{microtype}
\usepackage{graphicx}
\usepackage{subcaption}
\usepackage{booktabs} 

\usepackage{hyperref}
\usepackage{float}
\usepackage{xcolor}

\usepackage[accepted]{icml2026}

\usepackage{amsmath}
\usepackage{amssymb}
\usepackage{mathtools}
\usepackage{amsthm}
\usepackage{tabularx}        
\usepackage{makecell}        

\usepackage[capitalize,noabbrev]{cleveref}

\usepackage{multirow}
\usepackage{siunitx}

\theoremstyle{plain}
\newtheorem{theorem}{Theorem}[section]

\newtheorem{lemma}[theorem]{Lemma}

\theoremstyle{definition}

\theoremstyle{remark}
\newtheorem{remark}[theorem]{Remark}

\usepackage[textsize=tiny]{todonotes}

\usepackage{enumitem}
\DeclareMathOperator*{\argmin}{arg\ min}
\DeclareMathOperator{\essinf}{ess\ inf}
\DeclareMathOperator*{\E}{\mathbb{E}}
\DeclareMathOperator{\KL}{KL}
\DeclareMathOperator*{\plim}{plim}
\DeclareMathOperator*{\InF}{IF}

\icmltitlerunning{Minimum Distance Summaries for Robust Neural Posterior Estimation}

\begin{document}

\twocolumn[
  \icmltitle{Minimum Distance Summaries for Robust Neural Posterior Estimation}




  \begin{icmlauthorlist}
    \icmlauthor{Sherman Khoo}{bris}
    \icmlauthor{Dennis Prangle}{bris}
    \icmlauthor{Song Liu}{bris}
    \icmlauthor{Mark Beaumont}{brisbio}
  \end{icmlauthorlist}
  
  \icmlaffiliation{bris}{School of Mathematics, University of Bristol, Bristol, UK}
  \icmlaffiliation{brisbio}{Department of Biology, University of Bristol, Bristol, UK}

  \icmlcorrespondingauthor{Sherman Khoo}{sherman.khoo@bristol.ac.uk}
  
  \icmlkeywords{Machine Learning, ICML}

  \vskip 0.3in
]



\printAffiliationsAndNotice{}  

\begin{abstract}
Simulation-based inference (SBI) enables amortized Bayesian inference by first training a neural posterior estimator (NPE) on prior-simulator pairs, typically through low-dimensional summary statistics, which can then be cheaply reused for fast inference by querying it on new test observations. Because NPE is estimated under the training data distribution, it is susceptible to misspecification when observations deviate from the training distribution. Many robust SBI approaches address this by modifying NPE training or introducing error models, coupling robustness to the inference network and compromising amortization and modularity. We introduce minimum-distance summaries, a post-hoc robust NPE method that adapts queried test-time summaries independently of the pretrained NPE. Leveraging the maximum mean discrepancy (MMD) as a distance between observed data and a summary-conditional predictive distribution, the adapted summary displays strong robustness properties due to the robustness of the MMD. We demonstrate that the algorithm can be implemented efficiently with random Fourier feature approximations, yielding a lightweight, model-free test-time adaptation procedure. We provide theoretical guarantees for the robustness of our algorithm and empirically evaluate it on a range of synthetic and real-world tasks, demonstrating substantial robustness gains compared with existing robust SBI methods with minimal additional overhead.
\end{abstract}

\section{Introduction}
Across many areas of science and engineering, including cosmology \citep{alsingFastLikelihoodfreeCosmology2019a}, biology \citep{beaumontApproximateBayesianComputation2002a}, and neuroscience \citep{lueckmann2017flexible}, stochastic simulators are widely used as mechanistic statistical models.
Such simulators are able to encode rich domain knowledge, and generate synthetic data for any candidate parameter, but their complexity makes statistical inference particularly challenging, as while simulating from the model is straightforward, the induced likelihood function $p(\mathbf{x} \mid \theta)$ is typically intractable or prohibitively expensive to evaluate, making standard likelihood-based inference challenging \citep{ marinApproximateBayesianComputational2012}.
This problem setting is known as likelihood-free inference, or simulation-based inference (SBI) \citep{cranmerFrontierSimulationbasedInference2020a}. 

Modern SBI methods are built around an 
\emph{amortized} Bayesian inference paradigm, which is centered on training a conditional neural density estimator using simulated pairs $(\theta_i, \mathbf{x}_i)$ generated from the simulator model $p(\mathbf{x} \mid \theta)$ and the prior distribution $\pi(\theta)$ \citep{cranmerFrontierSimulationbasedInference2020a, deistler2025simulationbasedinferencepracticalguide}. While these estimators can target the likelihood \citep{papamakarios2019sequential}, or density ratio \citep{cranmer2015approximating, hermans2020likelihood} $r(\mathbf{x},\theta) = p(\mathbf{x} \mid \theta)/p(\mathbf{x})$, we focus here on the canonical neural posterior estimator (NPE) \citep{papamakariosFastEpsilonFree2016}, which directly learns a conditional density \( q_\psi(\theta \mid \mathbf{x}) \approx p(\theta \mid \mathbf{x})\), thus mapping each observation, or typically, a summary statistic \( \mathbf{s} \) to an approximation of its posterior distribution. After an initial, computationally heavy, offline training stage for the density estimator, the resulting estimator enables fast test-time statistical inference with a cheap forward pass. Test-time inference refers to post-training evaluations of the learned NPE $q_\psi(\theta \mid \mathbf{x})$ on new observed datasets, without further retraining of the estimated NPE, thus amortizing computational costs across future downstream queries. This paradigm is particularly attractive for many scientific workflows where inference is repeated across many downstream datasets and experimental conditions.

Nonetheless, because amortized estimators are trained under the simulator's prior predictive distribution $\mathrm{m}(\mathbf{x}) = \int p(\mathbf{x} \mid \theta) \pi(\theta) d\theta$, they are brittle under model misspecification. In SBI, model misspecification arises when the true DGP lies outside of the simulator family \citep{cannonInvestigatingImpactModel2022a}, operationally, this is reflected in the mismatch between the prior-predictive distribution and the DGP \citep{schmittDetectingModelMisspecification2024a, elsemullerDoesUnsupervisedDomain2025c}. Even a modest deviation from the true data-generating process (DGP) may lead to biased or unreliable posteriors \citep{wardRobustNeuralPosterior2022a,schmittDetectingModelMisspecification2024a}. Hence, there has been a growing literature of robust SBI approaches aimed at addressing this problem by reducing or accounting for this distributional discrepancy \citep{elsemullerDoesUnsupervisedDomain2025c, kellySimulationbasedBayesianInference2025b}.

 Although advances in robust SBI approaches have undoubtedly strengthened the real-world applicability of SBI, they often inherently couple robustness to the amortized inference method, for instance, by requiring observations during NPE training \citep{huangLearningRobustStatistics2023g}, modifying the training objective of the amortized estimator \citep{gao2023generalized}, or augmenting the simulator with additional modeling components \citep{wardRobustNeuralPosterior2022a}.  
This coupling reduces modularity and undermines the central benefit of amortization: reusing a single pretrained NPE for cheap, repeated inference across downstream queries.

In this work, we focus on the classical Huber's contamination model as the form of misspecification, where the observed data distribution is contaminated with some fraction of arbitrary outlier distribution, and we address model misspecification by reducing the gap between the simulator-induced distribution and true DGP, while adhering to the design principle of ensuring robustness without compromising amortization.
Concretely, we keep the pretrained NPE $q_\psi(\theta \mid \mathbf{s})$ fixed while modifying only the query of the NPE, i.e., the summary statistic of the observations. At test time,  we adapt the observed data summary statistic \( \mathbf{s} \)  such that it minimizes a robust discrepancy between an estimated summary-conditional data distribution and the empirical distribution of observations. 
We call the resulting adapted summary statistic a \emph{minimum-distance summary}, and perform robust inference by querying the fixed amortized NPE with this adapted summary. 
Consequently, this approach leads to a post-hoc process of robustifying a pre-trained NPE, 
which  integrates smoothly into realistic Bayesian data analysis workflows \citep{gelmanBayesianWorkflow2020}.
This modular design aligns with the emerging trend towards reusable, general-purpose amortized SBI inference approaches \citep{vetterEffortlessSimulationEfficientBayesian2025b, gloecklerAllinoneSimulationbasedInference2024f}, where amortized models are intended to be reused across a broad set of tasks, making a post-hoc, test-time robust method particularly desirable.

Our robust SBI approach aligns the data distribution and observations through a robust divergence, which we select as the maximum mean discrepancy (MMD) with a bounded characteristic kernel \citep{grettonKernelTwoSampleTest2012}, providing a robust, model-free objective for addressing misspecification. In order to provide a fast and lightweight estimation of the summary-conditional data distribution, we use a random Fourier feature approximation \citep{rahimiRandomFeaturesLargeScale2007} which reduces the estimation to a regression problem, and yields a lightweight post-hoc robust method that is fully decoupled from the amortized inference approach. Additionally, since misspecification is rarely known a priori, we use the same MMD objective as a misspecification sensitive diagnostic, only invoking test-time adaptation when misspecification is detected.

\section{Background}
\paragraph{Notation}
Let \( \theta \in \Theta \subseteq \mathbb{R}^{d_\theta} \) denote the model parameters, with a prior density \( \pi(\theta) \). We denote a single observation as \( \mathbf{x} \in \mathcal{X} \subseteq \mathbb{R}^{d_\mathbf{x}} \), and a dataset of \( N \) independent and identically distributed (i.i.d.) observations as \( \mathbf{x}_{1:N} = \{\mathbf{x}_n\}_{n=1}^N \in \mathcal{X}^N \). Let \( \mathcal{P}(\cdot) \) represent the space of Borel measures on a given sample space. The simulator model \( \mathbb{P}_\theta \in \mathcal{P}(\mathcal{X}) \) defines an implicit likelihood \( p(\mathbf{x} \mid \theta) \) for a single observation, and for a dataset of \( N \) i.i.d. observations we have the joint distribution \( p(\mathbf{x}_{1:N} \mid \theta) = \prod_{n=1}^N p(\mathbf{x}_n \mid \theta) \). We denote observed (possibly corrupted) datasets at test-time as \( \tilde{\mathbf{x}}_{1:N} \) and the corresponding data-generating distribution as \( \tilde{\mathbb{P}} \). For convenience, we provide a table of key expressions in our manuscript, which will be defined in subsequent sections.

\begin{table}[h]
\centering
\label{tab:notation}
\begin{tabularx}{\linewidth}{@{}lX@{}}
\toprule
Symbol & Definition \\
\midrule
$\mathbf{s}$ & Summary statistic of a dataset \\
$f(\mathbf{x}_{1:N})$ & Summary statistic function \\
$\mathbb{P}_{\mathbf{x} \mid \mathbf{s}}$ & Summary-conditional data distribution  \\
$\hat{\mathbb{P}}_N$ & Empirical distribution of observed dataset \( \tilde{\mathbf{x}}_{1:N} \) \\
$q_\psi(\theta \mid \mathbf{s})$ & Neural posterior estimator \\
$q_\omega(\mathbf{x} \mid \mathbf{s})$ & Decoder model \\
\bottomrule
\end{tabularx}
\end{table}

\subsection{Simulation-Based Inference}
\label{sec: bg-sbi}

Simulation-based inference tackles Bayesian inference for implicit models \( \mathbb{P}_\theta \) where the likelihood \( p(\mathbf{x} \mid \theta) \) is not tractable but simulations \( \mathbf{x} \sim \mathbb{P}_\theta \) can be drawn for any choice of \( \theta \in \Theta \) \citep{cranmer2015approximating}. SBI methods estimate an \emph{amortized} inference network using simulated parameter-data training samples, such that after this initial offline training stage, inference can be done during test-time for new observations cheaply with a single forward pass through the network. In this work, we focus on the canonical neural posterior estimator (NPE) \citep{papamakariosFastEpsilonFree2016}, which learns \( q_\psi(\theta \mid \mathbf{x}) \approx p( \theta \mid \mathbf{x} ) \), where \( \psi \) are parameters of a neural conditional density estimator. In high-dimensional data spaces, such as when we have a dataset of \( N \) i.i.d. observations \( \mathbf{x}_{1:N} \), we typically leverage a summary statistic \( f : \mathcal{X}^N \to \mathcal{S} \) and perform inference on the summary space \( q_\psi(\theta \mid \mathbf{s}) \) where \( \mathbf{s} = f(\mathbf{x}_{1:N}) \). This summary statistic can either be hand-crafted \citep{fearnheadConstructingSummaryStatistics2012}, or learned with a neural encoder \citep{chen2020neural}. Concretely, NPE uses training samples \( \{ (\theta^{(i)}, \mathbf{x}_{1:N}^{(i)}, \mathbf{s}^{(i)}) \}_{i=1}^M \), \( \mathbf{s} = f(\mathbf{x}_{1:N}) \) that are drawn from the following generative process: \( \theta \sim \pi(\theta) \), \( \mathbf{x}_{1:N} \sim p(\mathbf{x}_{1:N} \mid \theta) \) and \( \mathbf{s} = f(\mathbf{x}_{1:N}) \). Then, the NPE is trained by minimizing the negative log-likelihood loss \( \mathcal{L}(\omega) = - \frac{1}{M} \sum_{i=1}^M \log q_\psi(\theta^{(i)} \mid \mathbf{s}^{(i)}) \). 

Thus, this procedure enables amortized inference, where after the initial NPE training stage, the NPE can be cheaply evaluated on any new observations at test-time. Beyond amortizing inference across observations, there has been a recent trend towards general-purpose inference approach, where pretrained models are reused across a wide range of inference tasks and setups. These include all-in-one models representing different conditional distributions of the joint simulator distribution \citep{gloecklerAllinoneSimulationbasedInference2024f}, training-free approaches \citep{vetterEffortlessSimulationEfficientBayesian2025b} leveraging probabilistic foundation models \citep{mullerPositionFutureBayesian2025a}, and test-time procedures adapting amortized models to new priors \citep{yangPriorGuideTestTimePrior2025}.

\subsection{Robust Simulation-Based Inference}
\label{bg-rsbi}

Model misspecification arises when the data-generating distribution \( \tilde{\mathbb{P}} \) lies outside of the simulator model family \( \{ \mathbb{P}_\theta : \theta \in \Theta \} \). In SBI, this is a particularly acute issue due to the additional model approximation from neural density estimation, and even a moderate mismatch has been shown to affect the accuracy and calibration of resulting posteriors \citep{wardRobustNeuralPosterior2022a, cannonInvestigatingImpactModel2022a}.
Because the NPE is estimated from training samples, the distribution of training samples plays a key role in determining the effects of model misspecification, which is the prior-predictive distribution \( \mathrm{m}(\mathbf{x}_{1:N})=\int \Big(\prod_{n=1}^N p(\mathbf{x}_n\mid\theta)\Big)\,\pi(\theta)\,d\theta \).

This dependence can be made more explicitly by considering the negative log-likelihood objective function of the NPE, which can be shown to be equivalent to (up to additive constants) the expected KL-divergence between the true posterior and the NPE, under the prior-predictive distribution \citep{schmittDetectingModelMisspecification2024a}. Thus, when the observed data distribution deviates from the prior-predictive distribution, the quality of the NPE deteriorates. The difference between the DGP and the prior-predictive distribution, \( \mathcal{D}[\mathrm{m}(\mathbf{x}_{1:N}), \tilde{\mathbb{P}}(\mathbf{x}_{1:N})] \) is thus known as the simulation gap \citep{schmittDetectingModelMisspecification2024a}.

There has been a growing literature of robust SBI approaches that have been developed in order to tackle model misspecification. Following the taxonomy in \citet{kellySimulationbasedBayesianInference2025b}, existing approaches can be broadly grouped into three classes:
(i) \emph{robust summary approaches} that construct summary statistics that remove aspects of the data responsible for misspecification \citep{huangLearningRobustStatistics2023g},
(ii) \emph{generalized Bayesian methods} that modify the likelihood in the standard Bayesian update rule with a robust loss or divergence \citep{bissiriGeneralFrameworkUpdating2016a,dellaportaRobustBayesianInference2022b}
and
(iii) \emph{error modeling} which introduces an explicit error model to account for the gap between simulations and observed data \citep{ratmannModelCriticismBased2009,frazierRobustApproximateBayesian2021}. 
Although these approaches are effective, many methods inherently couple robustness with the inference procedure, resulting in robust inference procedures at the cost of amortization, such as using observed data during training of the NPE \citep{huangLearningRobustStatistics2023g, elsemullerDoesUnsupervisedDomain2025c} or additional error models \citep{wardRobustNeuralPosterior2022a, kelly2024misspecification}.
Also their theoretical support is limited.
However, concurrently with our work, \citet{bharti2026amortised} proposed a generalized Bayesian approach which is amortized and provably robust.

\subsection{Maximum Mean Discrepancy-Based Inference}
\label{bg-mmd}

The maximum mean discrepancy (MMD) \citep{grettonKernelTwoSampleTest2012} is a kernel-based discrepancy between two distributions \( \mathbb{P}, \mathbb{Q} \) on \( \mathcal{X} \), defined with respect to a positive definite kernel \( k : \mathcal{X} \times \mathcal{X} \to \mathbb{R} \) and its reproducing kernel Hilbert space (RKHS) \( \mathcal{H} \). 
Let \( \phi : \mathcal{X} \to \mathcal{H} \) be a feature map such that \( k(\mathbf{x}, \mathbf{y}) = \langle \phi(\mathbf{x}), \phi(\mathbf{y}) \rangle_{\mathcal{H}} \). The kernel mean embedding of \( \mathbb{P} \) is \( \mu_{\mathbb{P}} = \mathbb{E}_{\mathbf{x} \sim \mathbb{P}} [k(\mathbf{x}, \cdot)] = \mathbb{E}[\phi(\mathbf{x})] \) \citep{muandetKernelMeanEmbedding2017b}, and the MMD is defined based on the distance between mean embeddings in the RKHS: \( \mathrm{MMD}(\mathbb{P}, \mathbb{Q}) = \| \mu_{\mathbb{P}} - \mu_{\mathbb{Q}} \|_{\mathcal{H}} \).
Motivated by the computational complexity of the MMD, random Fourier features \citep{rahimiRandomFeaturesLargeScale2007} are an approximation to shift-invariant kernels using a finite-dimensional feature map \( 
\mathbf{z}: \mathcal{X} \to \mathbb{R}^D \), allowing us to approximate the MMD as \( \mathrm{MMD}(\mathbb{P}, \mathbb{Q}) \approx \| \mathbb{E}_{
\mathbf{x}\sim \mathbb{P}} [\mathbf{z}(\mathbf{x})] - \mathbb{E}_{\mathbf{x} \sim \mathbb{Q}} [\mathbf{z}(\mathbf{x})] \|_2 \).
Crucially for the goal of tackling model misspecification, when the kernel is bounded, the MMD objective can be shown to have robustness properties under contamination \citep{cherief-abdellatifFiniteSampleProperties2022a}.

The MMD can be estimated directly from samples of the distributions \( \mathbb{P}, \mathbb{Q} \), without requiring density evaluation, making the MMD a naturally well-suited divergence for simulator models. In particular, the MMD has been used for statistical inference of simulator models under the minimum distance estimation framework \citep{wolfowitzMinimumDistanceMethod1957}, and the resulting estimators have been shown to be consistent and robust under misspecification \citep{briolStatisticalInferenceGenerative2019b}. MMD-based discrepancies have also been leveraged in approximate Bayesian computation \citep{parkK2ABCApproximateBayesian2016}, generalized Bayesian inference \citep{cherief-abdellatifMMDBayesRobustBayesian2020, dellaportaRobustBayesianInference2022b}, and more broadly in generative modeling \citep{liGenerativeMomentMatching2015b,dziugaiteTrainingGenerativeNeural2015b}.

\section{Methodology}

Recall that our goal is to develop a lightweight test-time procedure for robust SBI, that is fully decoupled from the pretrained neural posterior estimator (NPE).  We introduce \emph{minimum-distance summaries} (MDS), which, given test-time observations \( \tilde{\mathbf{x}}_{1:N} \), selects an adapted summary \( \mathbf{s}^* \) by minimizing the simulation-gap in \emph{data-space}, specifically, we minimize a robust divergence between the summary-conditional data distribution \( \mathbb{P}_{\mathbf{x} \mid \mathbf{s}} \) and the empirical distribution of \(\tilde{\mathbf{x}}_{1:N}\). Performing this alignment on data-space leverages robustness directly where misspecification is most interpretable, and crucially, because the NPE depends on observations only through the summary statistic query \( \mathbf{s} \), the MDS \( \mathbf{s}^* \) can be estimated and used directly in the NPE \( q_\psi \).

\subsection{Minimum Distance Summaries}
\label{sec:mds}

\begin{figure*}[h]
  \centering
  \includegraphics[width=\textwidth]{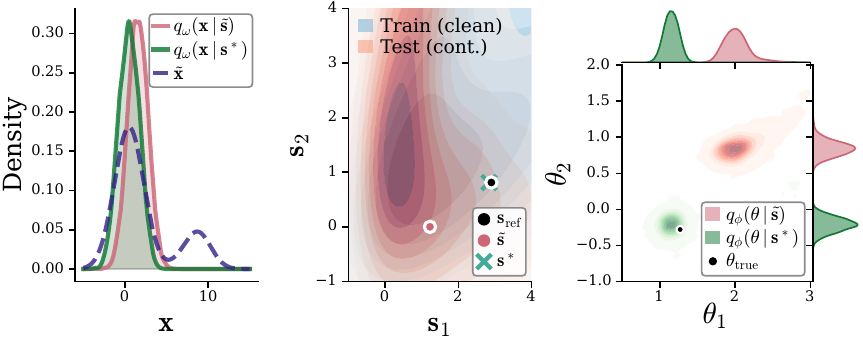}
  \caption{MDS for a bivariate Gaussian model with 20\% of observations \( \tilde{\mathbf{x}}_{1:100} \) contaminated by a shift of \( 8 \) units. \( \mathbf{s}_{\mathrm{ref}}, \tilde{\mathbf{s}}, \mathbf{s}^* \) are the oracle (uncontaminated) summary, test observation (contaminated) summary and adapted MDS summary respectively. Left: MDS aligns the decoder model \( q_\omega \) with the contaminated observations \( \tilde{\mathbf{x}} \) using a robust divergence. Middle: MDS is able to recover the oracle summary despite aligning on the data space. Right: MDS robustifies the NPE and the NPE is now able to recover the true parameters. }
  \label{fig:1}
\end{figure*}

Recall that the NPE \( q_\psi(\theta \mid \mathbf{s}) \) is trained under the prior-predictive distribution, which is generated from the following generative process, $\theta\sim\pi(\theta)$, $\mathbf{x}_{1:N}\mid\theta \sim \prod_{n=1}^N p(\mathbf{x}_n \mid\theta)$, and $\mathbf{s}=f(\mathbf{x}_{1:N})$. As our goal is to minimize the discrepancy in data-space, in order to connect a candidate summary statistic \( \mathbf{s} \) to the data distribution, we consider the summary-conditional data distribution  \( \mathbb{P}_{\mathbf{x} \mid \mathbf{s}} = \mathrm{Law}(\mathbf{x}_1 \mid S = \mathbf{s}) \in \mathcal{P}(\mathcal{X}) \). In particular, note that when \( \mathbf{x}_{1:N} \) are i.i.d. samples, and \( f \) is a permutation-invariant summary statistic, the conditional distribution  \( \mathbf{x}_{1:N} \mid S = \mathbf{s} \) is exchangeable over \( N \) and consequently, conditional distributions \( \mathrm{Law}(\mathbf{x}_n \mid S = \mathbf{s}) \) are identical for all \( n \).

Given a target distribution \( \mathbb{Q} \in \mathcal{P}(\mathcal{X}) \), we define the minimum-distance summary (MDS) as follows:

\begin{equation}
\label{eq:mds_def}
\mathbf{s}^\star(\mathbb{Q})
~=~
\arg\min_{\mathbf{s}\in\mathcal{S}}
\mathcal{D}\!\left(\mathbb{P}_{\mathbf{x}\mid \mathbf{s}},\,\mathbb{Q}\right),
\end{equation}

where \( \mathcal{D} \) is a statistical divergence between the distributions defined on \( \mathcal{X} \). For an observed dataset \( \tilde{\mathbf{x}}_{1:N} \), we take \( \mathbb{Q} \) to be the empirical distribution of the observed dataset, $\hat{\mathbb{P}}_N := \frac{1}{N}\sum_{n=1}^N \delta_{\tilde{\mathbf{x}}_n}$. The MDS objective can thus be seen as looking for the projection of the empirical distribution on the family of models \( \{\mathbb{P}_{\mathbf{x}\mid \mathbf{s}} : \mathbf{s} \in \mathcal{S} \} \), analogously to minimum-distance estimation \citep{wolfowitzMinimumDistanceMethod1957}. Intuitively, the MDS \( \mathbf{s}^* \) represents the summary statistic that best matches the empirical distribution, under a certain choice of divergence. As we will see in subsequent sections, the choice of divergence is crucial in ensuring robustness and computational efficiency of the MDS. We note here that the MDS objective is defined on \emph{marginal} datapoints \( \mathbf{x}_n \), instead of over the entire dataset \( \mathbf{x}_{1:N} \), as we only observe a single realization of the observation dataset \( \tilde{\mathbf{x}}_{1:N} \), making divergence estimation on this space statistically inefficient.

\paragraph{Amortized Decoder Estimation}

To compute the divergence \( \mathcal{D} \) in Equation~\eqref{eq:mds_def}, we need the conditional distribution \( \mathbb{P}_{\mathbf{x} \mid \mathbf{s}} \) given a particular $\mathbf{s}$. We approximate this distribution with an amortized (over summary statistics \( \mathbf{s} \)) conditional density estimator \( q_\omega(\mathbf{x} \mid \mathbf{s}) \), trained offline using the same training samples used for NPE training, \( \{(\mathbf{x}_{1:N}^{(i)}, \mathbf{s}^{(i)})\}_{i=1}^M \) (allowing us to avoid additional simulations), which we call a \emph{decoder model}, as it inverts the summary statistic back onto marginal data points. 

Depending on the type of divergence \( \mathcal{D} \), we may require exact densities, or possibly just samples from the distribution. A flexible choice of neural density estimator would thus be a flow-based model \citep{papamakarios2017masked, durkanNeuralSplineFlows2019a}, which allows for both exact density evaluation and sampling. A flow-based conditional density estimator can be used as a decoder model \( q_\omega \) by training with the negative log-likelihood loss, \( \mathcal{L}(\omega) = -\frac{1}{MN}\sum_{i=1}^M\sum_{n=1}^N \log q_\omega(\mathbf{x}_n^{(i)}\mid \mathbf{s}^{(i)}) \).

Given the trained, amortized decoder model, during test-time, given observations \( \tilde{\mathbf{x}}_{1:N} \), we can estimate the objective in Equation~\eqref{eq:mds_def} with \( \mathcal{D}(q_\omega(\mathbf{x} \mid\mathbf{s}),\hat{\mathbb{P}}_N) \), which can be optimized directly with respect to \( \mathbf{s} \) to obtain the MDS. A standard choice of divergence would be the forward KL divergence \citep{kullback1951information}, which is equivalent to likelihood maximization. A computational issue faced with using the KL divergence is that estimating the decoder model requires conditional density estimation, which is in general expensive and limits the general applicability of our method. As we shall see in the next section, using the MMD bypasses the need for full density estimation, thus allowing us to mitigate this computational challenge. Furthermore and crucially, the KL divergence objective is not considered robust to contamination as the log-likelihood is unbounded and a small fraction of outliers, with small model density can dominate the objective. Since our goal is to obtain a \emph{robust} SBI approach, we require a robust divergence. A natural choice for a robust divergence would be the maximum-mean discrepancy (MMD). We provide a comparison of the MMD with the forward KL divergence in Appendix~\ref{app:div}. 

\subsection{Maximum Mean Discrepancy MDS}
\label{sec:mmd-mds}

The maximum mean discrepancy (MMD) \citep{grettonKernelTwoSampleTest2012} is a highly attractive choice of discrepancy \( \mathcal{D} \) for MDS estimation as it has strong theoretical guarantees, in particular being robust under outlier contamination for bounded kernels \citep{cherief-abdellatifFiniteSampleProperties2022a}, and, as it depends on distributions only through their kernel mean embeddings, allows us to sidestep density estimation. Inheriting the robustness properties from the MMD, we provide theoretical results on the MMD based MDS in Section~\ref{sec:theory}.

Recall that for two probability distributions \( \mathbb{P}, \mathbb{Q} \in \mathcal{P}(\mathcal{X})\), the MMD is defined  \( \mathrm{MMD}^2(\mathbb{P}, \mathbb{Q}) = \left\|\mu_{\mathbb{P}}-\mu_{\mathbb{Q}}\right\|_{\mathcal{H}}^2 \)where $\phi$ is the feature map associated with $k$,  $\mathcal{H}$ is the corresponding RKHS, and \( \mu_P := \mathbb{E}_{\mathbf{x} \sim \mathbb{P}}[\phi(\mathbf{x})] \) is the kernel mean embedding. Using this in our MDS objective in Equation~\ref{eq:mds_def},
\begin{align*}
\mathbf{s}^\star
&~=~
\arg\min_{\mathbf{s}\in\mathcal{S}}
\mathrm{MMD}^2\!\big(\mathbb{P}_{\mathbf{x} \mid \mathbf{s}},\,\hat{\mathbb{P}}_N\big) \\
&~=~
\arg\min_{\mathbf{s}\in\mathcal{S}}
\left\| \mu(\mathbf{s}) - \frac{1}{N}\sum_{n=1}^N \phi(\tilde{\mathbf{x}}_n) \right\|_{\mathcal{H}}^2,
\end{align*}

where we have \( \mu(\mathbf{s}) := \mathbb{E}\!\left[\phi(\mathbf{x})\mid S = \mathbf{s}\right] \) as the summary-conditional mean embedding.

Crucially, note that the objective depends on the distribution $\mathbb{P}_{\mathbf{x} \mid\mathbf{s}}$ only through its conditional mean embedding $\mu(\mathbf{s})$. Conditional mean embeddings can be estimated directly \citep{songHilbertSpaceEmbeddings2009, song2013kernel}, but require computationally expensive Gram matrix inversions, which is especially problematic if the number of training samples is large.

With the goal of obtaining a fast, lightweight, test-time adaptation procedure, we leverage random Fourier features \citep{rahimiRandomFeaturesLargeScale2007}, which allows us to approximate the Gaussian kernel feature map with a finite dimensional feature map \( \mathbf{z}(\cdot) \in \mathbb{R}^K \) based on the Fourier transform of the kernel. Under this approximation, the MDS objective can be further approximated as the Euclidean distance between finite-dimensional mean embeddings,

\begin{equation}
\label{eq:mmd-rff}  
\mathrm{MMD}^2\!\big(\mathbb{P}_{\mathbf{x} \mid \mathbf{s}},\,\hat{\mathbb{P}}_N\big) \approx \left \| \mathbb{E} [\mathbf{z}(\mathbf{x}) \mid S = \mathbf{s}] - \frac{1}{N}\sum_{n=1}^N \mathbf{z}(\tilde{\mathbf{x}}_n) \right \|^2_2
\end{equation}

\paragraph{Amortized Decoder Mean Embedding} By leveraging the random Fourier feature objective in Equation~\eqref{eq:mmd-rff}, we can now directly estimate the conditional mean embedding via regression. Specifically, using training samples (from the NPE training) \( \{(\bar{\mathbf{z}}^{(i)}, \mathbf{s}^{(i)}) \}_{i=1}^M \), where \( \bar{\mathbf{z}}^{(i)} = \frac{1}{N} \sum_{n=1}^N \mathbf{z}(\mathbf{x}_n^{(i)}) \) is the empirical mean embedding, we can estimate the conditional decoder mean embedding \( \mu_\omega : \mathbb{R}^{d_\mathbf{s}} \to \mathbb{R}^K \), by simply minimizing a mean-squared error loss function, \( \mathcal{L}(\omega) = \frac{1}{M} \sum_{i=1}^M \| \mu_{\omega}(\mathbf{s}^{(i)}) - \bar{\mathbf{z}}^{(i)} \|^2_2 \). In practice, we parameterize the decoder mean embedding \( \hat{\mu}_\omega \) as a neural network, which is particularly suitable for predicting high dimensional random Fourier feature embeddings. In particular, note that, as with the density-based decoder model proposed earlier, the estimated decoder mean embedding \( \hat{\mu}(\mathbf{s}) \) is amortized with respect to \( \mathbf{s} \), and can be train in a completely offline setting, without test observations.

\paragraph{Test-Time MDS Adaptation} At test-time, given an observed dataset \( \tilde{\mathbf{x}}_{1:N} \) and the estimated amortized decoder mean embedding \( \hat{\mu}_\omega \), we can directly optimize the MDS objective in Equation~\eqref{eq:mmd-rff} with \( \mathbf{s}^* = \arg\min_{\mathbf{s} \in \mathcal{S}} \| \hat{\mu}_\omega(\mathbf{s}) - \frac{1}{N}\sum_{n=1}^N \mathbf{z}(\tilde{\mathbf{x}}_n) \|_2^2 \). Compared to the density-based decoder model optimization, this objective is completely deterministic, and allows for the use of fast gradient-based optimization algorithms, e.g., quasi-Newton methods or line search. We leverage the L-BFGS algorithm \citep{liuLimitedMemoryBFGS1989}, initialized at the observed summary statistic \( \tilde{\mathbf{s}} = f(\tilde{\mathbf{x}}_{1:N}) \). The resulting adapted summary is then used directly as an input to the pretrained NPE, providing a lightweight, robust, test-time procedure that is fully decoupled from the NPE. We provide the algorithm using stochastic gradient descent (SGD) in Algorithm~\ref{alg:mds}, and an illustration of the MDS with the MMD in Figure~\ref{fig:1}. Furthermore, in many realistic modeling scenarios, misspecification is not known a-priori. Since our MDS method adapts the query summary statistics starting from the original observed summary, when combined with a calibration based heuristic, we propose a joint procedure that proceeds with the MDS adaptation only when model misspecification is detected, which we discuss in Appendix~\ref{app:detect_misspec}.

\begin{algorithm}[tb]
  \caption{MMD minimum distance summary with SGD}
  \label{alg:mds}
  \begin{algorithmic}[1]
    \STATE {\color{blue} Offline (Training-time)} 
    \STATE \textbf{Inputs:} Training samples \( \{(\mathbf{x}_{1:N}^{(i)}, \mathbf{s}^{(i)})\}_{i=1}^M \), decoder mean embedding model \( \mu_\omega(\mathbf{s}) \)

    \STATE Compute empirical mean embeddings for training samples \( \bar{\mathbf{z}}^{(i)} = \frac{1}{N} \sum_{n=1}^N \mathbf{z}(\mathbf{x}_n^{(i)}) \)

    \STATE Estimate decoder mean embedding \( \hat{\mu}_\omega(\mathbf{s}) \) with MSE objective \( \mathcal{L}(\omega) = \frac{1}{M} \sum_{i=1}^M \| \mu_{\omega}(\mathbf{s}^{(i)}) - \bar{\mathbf{z}}^{(i)} \|^2_2 \)

    \STATE \textbf{Return:} Decoder mean embedding \( \hat{\mu}_\omega(\mathbf{s})  \) (\emph{Amortized})
    \STATE {\color{red} Online (Test-time)}
    \STATE \textbf{Inputs:} Observations \( \tilde{\mathbf{x}}_{1:N} \), pretrained NPE \( q_\psi(\theta \mid \mathbf{s}) \), SGD step size \(\eta \) and iterations \( T\)

    \STATE Compute empirical mean embeddings for observations \( \hat{\mu}_{\mathrm{obs}} = \frac{1}{N} \sum_{i=1}^N \mathbf{z}(\tilde{\mathbf{x}}_N) \)

    \STATE Compute initial summary statistic \( \mathbf{s}_0 \leftarrow f(\tilde{\mathbf{x}}_{1:N}) \)
    \FOR{$t=1$ \textbf{to} $T$}
      \STATE $\mathbf{s}_{t+1} \leftarrow \mathbf{s}_t - \eta \cdot \nabla_{\mathbf{s}} \left\|\hat\mu_\omega(\mathbf{s}_t) - \hat\mu_{\mathrm{obs}}\right\|_2^2$
    \ENDFOR

    \STATE $\mathbf{s}^* \leftarrow \mathbf{s}_{T}$
    
    \STATE \textbf{Return:} NPE evaluated at MDS: $q_\psi(\theta\mid\mathbf{s}^*)$
  \end{algorithmic}
\end{algorithm}

\section{Theory}
\label{sec:theory}
In this section, we provide theoretical guarantees on the robustness and consistency of the posteriors produced with our proposed minimum-distance summaries (MDS) with the maximum-mean discrepancy (MMD). 

\subsection{Robustness}

We analyze the sensitivity of the posterior distribution obtained by querying with the MDS, under contamination of the target data distribution. We denote \( \{ \mathbb{P}_{\theta|\mathbf{s}} : \mathbf{s} \in \mathcal{S} \} \) as the family of summary posterior distributions defined on  \( \mathcal{P}(\Theta) \), and \( \{ \mathbb{P}_{\mathbf{x}|\mathbf{s}} : \mathbf{s} \in \mathcal{S} \} \) as the family of summary-conditional data distributions defined on \( \mathcal{P}(\mathcal{X}) \). We assume both families admit regular conditional distributions, for all \( \mathbf{s} \in \mathcal{S} \). We note that these conditionals may be exact, or approximations, in our proposed MDS approach they correspond to the pretrained NPE \( q_\psi \) and decoder model \( q_\omega \) respectively.  

Recall that, given a target distribution \( \mathbb{Q} \in \mathcal{P}(\mathcal{X}) \), the population MMD based minimum-distance summary objective is:
\[
\mathbf{s}^*(\mathbb{Q}) = \arg\min_{\mathbf{s} \in \mathcal{S}}
\mathrm{MMD}(\mathbb{P}_{\mathbf{x} | \mathbf{s}}, \mathbb{Q}),
\]
where the MMD is computed with respect to a fixed kernel \( k \). We assume a minimizer exists, and refer to \citet{briolStatisticalInferenceGenerative2019b} for conditions guaranteeing this.

In order to model outlier contamination, we consider Huber's classic \( \epsilon \) contamination model.
For any \( \mathbf{y} \in \mathcal{X} \) and \( \epsilon \in [0,1] \), define the contaminated distribution $\mathbb{Q}_{\epsilon, \mathbf{y}} = (1-\epsilon) \mathbb{Q} + \epsilon \delta_\mathbf{y}$.
Theorem~\ref{thm:robustness} shows that any small contamination to the target distribution \( \mathbb{Q} \) leads to a proportionally small change (as measured by the KL divergence) to the resulting posterior approximation with the MDS.

\begin{theorem} \label{thm:robustness}
Consider any $\mathbb{Q} \in \mathcal{P}(\mathcal{X})$ and $\mathbf{y} \in \mathcal{X}$, and let $\mathbb{Q}_{\epsilon, \mathbf{y}} = (1-\epsilon) \mathbb{Q} + \epsilon \delta_\mathbf{y}$, for $\epsilon \in [0,1]$.
Then, under the conditions of Appendix \ref{app:robustness assumptions}:
\begin{equation} \label{eq:robustness}
\sup_{\mathbf{y} \in \mathcal{X}} \frac{d}{d\epsilon} \KL[
\mathbb{P}_{\theta|\mathbf{s}^*(\mathbb{Q})},
\mathbb{P}_{\theta|\mathbf{s}^*(\mathbb{Q}_{\epsilon, \mathbf{y}})}
] \Big|_{\epsilon=0} < \infty.
\end{equation}
\end{theorem}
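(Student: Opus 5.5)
The plan is a chain-rule argument: differentiate the KL through the map $\epsilon \mapsto \mathbf{s}^*(\mathbb{Q}_{\epsilon,\mathbf{y}})$, which splits the derivative into a posterior-sensitivity factor and an influence function of the MDS with respect to contamination. Write $\mathbf{s}_\epsilon := \mathbf{s}^*(\mathbb{Q}_{\epsilon,\mathbf{y}})$ and $g(\mathbf{s}) := \KL[\mathbb{P}_{\theta|\mathbf{s}_0}, \mathbb{P}_{\theta|\mathbf{s}}]$. Then $\frac{d}{d\epsilon} g(\mathbf{s}_\epsilon)|_{\epsilon=0} = \nabla_\mathbf{s} g(\mathbf{s}_0)^\top \frac{d\mathbf{s}_\epsilon}{d\epsilon}|_{\epsilon=0}$. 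Since $g$ is minimized (value $0$) at $\mathbf{s}=\mathbf{s}_0$, under differentiability of $\mathbf{s}\mapsto \log p(\theta\mid\mathbf{s})$ with an integrable dominating function (assumed in the appendix conditions), $\nabla_\mathbf{s} g(\mathbf{s}_0) = -\E_{\mathbb{P}_{\theta|\mathbf{s}_0}}[\nabla_\mathbf{s}\log p(\theta\mid\mathbf{s}_0)] = 0$. This gives the score identity, in which the derivative vanishes. Even without using this cancellation, it suffices to show that $\nabla_\mathbf{s} g(\mathbf{s}_0)$ is finite and does not depend on $\mathbf{y}$, and that $\sup_\mathbf{y}\|\frac{d\mathbf{s}_\epsilon}{d\epsilon}|_{0}\|<\infty$. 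I will aim for this more robust version, with the vanishing as a remark.

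The main step is a bounded influence function for the MDS, via the implicit function theorem applied to the first-order condition. Let $L(\mathbf{s},\epsilon) = \|\mu(\mathbf{s}) - \mu_{\mathbb{Q}_{\epsilon,\mathbf{y}}}\|_\mathcal{H}^2$. Its embedding is linear in $\epsilon$: $\mu_{\mathbb{Q}_{\epsilon,\mathbf{y}}} = \mu_\mathbb{Q} + \epsilon(\phi(\mathbf{y}) - \mu_\mathbb{Q})$. The stationarity condition is
\[
F(\mathbf{s},\epsilon) := \big\langle \partial_{s_j}\mu(\mathbf{s}),\, \mu(\mathbf{s}) - \mu_\mathbb{Q} - \epsilon(\phi(\mathbf{y})-\mu_\mathbb{Q})\big\rangle_{\mathcal{H}} = 0, \quad j=1,\dots,d_\mathbf{s}.
\]
Assume, as the appendix conditions should provide, that $\mathbf{s}_0$ is an interior minimizer, that $\mu$ is $C^2$ in $\mathcal{H}$-norm near $\mathbf{s}_0$, and that the Hessian $H=\partial_\mathbf{s}F(\mathbf{s}_0,0)$ is invertible. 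The implicit function theorem then gives
\[
\frac{d\mathbf{s}_\epsilon}{d\epsilon}\Big|_{0} = H^{-1}\big(\langle \partial_{s_j}\mu(\mathbf{s}_0), \phi(\mathbf{y})-\mu_\mathbb{Q}\rangle_\mathcal{H}\big)_j .
\]
The key robustness step is bounding this uniformly in $\mathbf{y}$. By Cauchy--Schwarz and the bounded kernel, $\|\phi(\mathbf{y})\|_\mathcal{H} = \sqrt{k(\mathbf{y},\mathbf{y})}\le\sqrt{K}$ and $\|\mu_\mathbb{Q}\|\le\sqrt{K}$. So each entry is at most $2\sqrt{K}\,\|\partial_{s_j}\mu(\mathbf{s}_0)\|_\mathcal{H}$, and the influence is bounded by $2\sqrt{K}\,\|H^{-1}\|\,\big(\sum_j\|\partial_{s_j}\mu(\mathbf{s}_0)\|^2\big)^{1/2}$, independently of $\mathbf{y}$.

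The main obstacle is justifying that the implicit-function branch coincides with the argmin $\mathbf{s}^*(\mathbb{Q}_{\epsilon,\mathbf{y}})$, since the MDS is a global minimizer and might jump. I would handle this in two steps. First, use continuity: $\sup_\mathbf{s}|L(\mathbf{s},\epsilon)-L(\mathbf{s},0)|\le 4K\epsilon$, uniformly in $\mathbf{y}$, so any minimizer at $\epsilon$ is a $4K\epsilon$-near-minimizer at $0$. Second, combine this with an identifiability/well-separatedness assumption on $\mathbf{s}_0$ (unique minimizer, with $\inf_{\|\mathbf{s}-\mathbf{s}_0\|>\delta} L(\mathbf{s},0) > L(\mathbf{s}_0,0)$), which forces $\mathbf{s}_\epsilon\to\mathbf{s}_0$. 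The local branch is then the minimizer for small $\epsilon$. Finally, a dominated-differentiation argument on $g$ near $\mathbf{s}_0$ makes the chain rule legitimate, and the supremum over $\mathbf{y}$ of the product of the $\mathbf{y}$-free gradient and the uniformly bounded influence is finite, proving \eqref{eq:robustness}.
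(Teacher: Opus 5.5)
Your proof is correct in outline, but it takes a different route from the paper on both halves.

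\textbf{Summary side.} The paper invokes the minimum-scoring-rule formula $\InF(\mathbf{y};\mathbb{Q}) = M^{-1}\nabla_\mathbf{s}\xi(\mathbf{y},\mathbb{P}_{\mathbf{x}|\mathbf{s}})$ by citation. It then bounds $\nabla_\mathbf{s}\xi = -2\langle\nabla\mu(\mathbf{s}),\phi(\mathbf{y})-\mu(\mathbf{s})\rangle_\mathcal{H}$ uniformly in $\mathbf{y}$, using $\sup k<\infty$ and integrability of $\partial_{\mathbf{s}^i}p_\mathbf{s}$. Your implicit-function-theorem derivation produces the same numerator, up to a factor $-2$, by the first-order condition at $\epsilon=0$. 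It also justifies that the local branch is the global argmin, which the paper leaves to the citation.

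The price is hypotheses not in the paper's list: an interior, unique, well-separated minimizer, and $\mu$ of class $C^2$ into $\mathcal{H}$. Your $H$ also keeps the term $\langle\partial^2_{jk}\mu(\mathbf{s}_0),\mu(\mathbf{s}_0)-\mu_\mathbb{Q}\rangle_\mathcal{H}$. The paper's $M$ drops it, because it integrates against $p_\mathbf{s}$ rather than $\mathbb{Q}$. So your nonsingularity condition is the one actually needed when $\mathbb{Q}$ is off-model, and it matches the paper's only when $\mu(\mathbf{s}_0)=\mu_\mathbb{Q}$. A harmless slip: the uniform perturbation of the squared objective is $8K\epsilon+4K\epsilon^2$ with $K=\sup k$, not $4K\epsilon$.

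\textbf{Posterior side.} The paper never differentiates the KL. It converts $\|\mathbf{s}-\mathbf{t}\|$ into $\|\Phi_\mathbf{s}-\Phi_\mathbf{t}\|_{L^1_\mu}$ via the mean value theorem and the envelope $g_K$. It then applies the local Lipschitz stability theorem of \citet{sprungkLocalLipschitzStability2020a} to get $\frac{1}{\epsilon}\KL \le k_1k_2\frac{1}{\epsilon}\|\mathbf{s}^*(\mathbb{Q})-\mathbf{s}^*(\mathbb{Q}_{\epsilon,\mathbf{y}})\|$.

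Your chain rule plus score identity is sharper: it shows the derivative is exactly $0$, because KL is second order at its minimum, so the paper's first-order bound is loose. The paper's assumptions do support your dominated differentiation. Since $\essinf_\mu\Phi_\mathbf{s}=0$, one has $p(\theta\mid\mathbf{s})\le\pi(\theta)/Z(\mathbf{s})$ with $Z(\mathbf{s})=\int\pi(\theta)e^{-\Phi_\mathbf{s}(\theta)}d\theta$, so the prior-integrable $g_K$ also dominates under the posterior.

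\textbf{What the paper's route buys.} It needs only Lipschitz-type control, not differentiability of $\epsilon\mapsto\mathbf{s}_\epsilon$ or of $\mathbf{s}\mapsto\KL$. It does tacitly presume that the limit of $\frac{1}{\epsilon}\KL$ exists, which your argument actually establishes. It also transfers to other discrepancies covered by that stability theory, such as Hellinger or total variation distance, where the first-order term does not vanish.
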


\begin{proof}
See Appendix \ref{sec:robustness_proof}.
\end{proof}
\paragraph{Proof Sketch} Our proof is composed of three parts. Firstly, the results of \citet{briolStatisticalInferenceGenerative2019b} show that the MDS is stable under contamination, i.e., $\mathbf{s}^*(\mathbb{Q})$ and $\mathbf{s}^*(\mathbb{Q}_{\epsilon, \mathbf{y}})$ are close. Secondly, we show this implies the resulting likelihoods (based on observing summaries) are close.
Finally, this lets us leverage the results of \citet{sprungkLocalLipschitzStability2020a} to show that the posteriors are close in KL. 

The robustness property \eqref{eq:robustness} considers infinitessimal contamination.
This is a variation on the \emph{Kullback Leibler posterior influence function}
(see \citealp{bharti2026amortised}) which considers instead a single observation being contaminated.
The result in Theorem \ref{thm:robustness} could likely be extended to other divergences
considered in \citet{sprungkLocalLipschitzStability2020a}.
However it would be a stronger result to establish \emph{global posterior robustness} (see \citealp{bharti2026amortised})
which requires pointwise robustness of the posterior density.

\subsection{Consistency}

We provide a consistency result for the MMD-based minimum-distance summary, showing that, when the model is correctly specified, using the MDS does not affect posterior consistency. Intuitively, if the posterior distribution conditioned on the original summary statistic \( \mathbf{s}_N \) concentrates on the true parameter \( \theta_0 \) as \( N \to \infty \), then the posterior distribution conditioned on the MDS \( \mathbf{s}^* \) concentrates in a similar way as well. Note that we use \( \mathbf{s}_N \) instead of \( \mathbf{s} \) to highlight the dependency on the number of samples in the summary statistic \( \mathbf{s}_N = f_N(\mathbf{x}_{1:N}) \). 

We can now state our result, showing consistency under the original summaries $\mathbf{s}_N$ implies consistency under the minimum distance summaries $\mathbf{s}^*_N$.

\begin{theorem} \label{thm:consistency}
Under the conditions of Appendix \ref{app:consistency assumptions}, if $\mathbb{P}_{\theta | \mathbf{s}_N}$ converges weakly to $\delta_{\theta_0}$ for almost every sequence $(\mathbf{x}_i)_{i \geq 1}$ sampled under $\theta_0$, then $\mathbb{P}_{\theta | \mathbf{s}^*_N}$ converges weakly to $\delta_{\theta_0}$ as well.
\end{theorem}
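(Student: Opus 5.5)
The plan is to transfer consistency from $\mathbf{s}_N$ to $\mathbf{s}^*_N$ through a continuity argument in summary space. I would use three properties, which I expect the conditions of Appendix~\ref{app:consistency assumptions} to provide in some form:
\begin{enumerate}
\item[(A1)] identifiability of the summary-conditional family, so that $\mathrm{MMD}(\mathbb{P}_{\mathbf{x}\mid\mathbf{s}_n},\mathbb{P}_{\mathbf{x}\mid\mathbf{s}})\to 0$ forces $\mathbf{s}_n\to\mathbf{s}$, or at least forces the posteriors $\mathbb{P}_{\theta\mid\mathbf{s}_n}$ and $\mathbb{P}_{\theta\mid\mathbf{s}}$ to have the same weak limit;
\item[(A2)] continuity of $\mathbf{s}\mapsto\mathbb{P}_{\theta\mid\mathbf{s}}$ in the weak topology, in the appropriate uniform-in-$N$ sense, since the summary map $f_N$ changes with $N$;
\item[(A3)] a bounded characteristic kernel.
\end{enumerate}
The goal is to show that $\mathbf{s}^*_N$ and $\mathbf{s}_N$ become indistinguishable as inputs to the posterior map, so that weak convergence of $\mathbb{P}_{\theta\mid\mathbf{s}_N}$ to $\delta_{\theta_0}$ carries over.

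\emph{Step 1: the empirical measure converges.} Fix a sequence $(\mathbf{x}_i)_{i\ge1}$ drawn i.i.d.\ from $\mathbb{P}_{\theta_0}$. With a bounded kernel, the usual MMD concentration bound $\mathrm{MMD}(\hat{\mathbb{P}}_N,\mathbb{P}_{\theta_0})=O(N^{-1/2}\sqrt{\log N})$ combined with Borel--Cantelli gives $\mathrm{MMD}(\hat{\mathbb{P}}_N,\mathbb{P}_{\theta_0})\to0$ almost surely. Intersecting this full-measure set with the one on which $\mathbb{P}_{\theta\mid\mathbf{s}_N}\Rightarrow\delta_{\theta_0}$ lets us work on a single full-measure set of sequences.

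\emph{Step 2: both summaries track $\mathbb{P}_{\theta_0}$.} Since $\mathbf{s}^*_N$ minimises the MMD, the triangle inequality gives
\[
\mathrm{MMD}(\mathbb{P}_{\mathbf{x}\mid\mathbf{s}^*_N},\hat{\mathbb{P}}_N)\le \mathrm{MMD}(\mathbb{P}_{\mathbf{x}\mid\mathbf{s}_N},\hat{\mathbb{P}}_N)\le \mathrm{MMD}(\mathbb{P}_{\mathbf{x}\mid\mathbf{s}_N},\mathbb{P}_{\theta_0})+o(1).
\]
For the first term on the right I would use an assumption, or argue directly, that under correct specification the summary-conditional data law at the observed summary approaches $\mathbb{P}_{\theta_0}$. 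Heuristically, $\mathbb{P}_{\mathbf{x}\mid\mathbf{s}}=\int\mathbb{P}_\theta\,d\mathbb{P}_{\theta\mid\mathbf{s}}$. Because the kernel is bounded and continuous, $\theta\mapsto\mu_{\mathbb{P}_\theta}$ is bounded, and if it is also continuous then $\mathbb{P}_{\theta\mid\mathbf{s}_N}\Rightarrow\delta_{\theta_0}$ gives $\mu(\mathbf{s}_N)\to\mu_{\mathbb{P}_{\theta_0}}$ in $\mathcal{H}$. This mixture identity holds only up to the dependence of $\mathbf{x}_1$ on $\mathbf{s}$ beyond $\theta$, and I would either assume it away or absorb it as a vanishing term. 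Combining, both $\mathrm{MMD}(\mathbb{P}_{\mathbf{x}\mid\mathbf{s}^*_N},\mathbb{P}_{\theta_0})$ and $\mathrm{MMD}(\mathbb{P}_{\mathbf{x}\mid\mathbf{s}_N},\mathbb{P}_{\theta_0})$ tend to zero.

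\emph{Step 3: from data-space closeness to posterior closeness.} This is the main obstacle. Small MMD between $\mathbb{P}_{\mathbf{x}\mid\mathbf{s}^*_N}$ and $\mathbb{P}_{\theta_0}$ must force $\mathbb{P}_{\theta\mid\mathbf{s}^*_N}\Rightarrow\delta_{\theta_0}$. My first attempt uses the mixture representation, $\mu(\mathbf{s}^*_N)=\int\mu_{\mathbb{P}_\theta}\,d\mathbb{P}_{\theta\mid\mathbf{s}^*_N}(\theta)\to\mu_{\mathbb{P}_{\theta_0}}$. I would pair this with identifiability in the following strong sense: $\mu_{\mathbb{P}_{\theta_0}}$ is an extreme point of the closed convex hull of $\{\mu_{\mathbb{P}_\theta}\}$, exposed by some $h\in\mathcal{H}$ with $\langle h,\mu_{\mathbb{P}_\theta}\rangle<\langle h,\mu_{\mathbb{P}_{\theta_0}}\rangle-\delta$ whenever $\|\theta-\theta_0\|>\eta$. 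Testing against $h$ then bounds the posterior mass outside any $\eta$-ball, which yields weak convergence. Since this is a strong condition, the fallback is to take the assumption in Appendix~\ref{app:consistency assumptions} to be exactly this form of identifiability: $\mathrm{MMD}(\mathbb{P}_{\mathbf{x}\mid\mathbf{s}'_N},\mathbb{P}_{\mathbf{x}\mid\mathbf{s}_N})\to0$ implies that $\mathbb{P}_{\theta\mid\mathbf{s}'_N}$ and $\mathbb{P}_{\theta\mid\mathbf{s}_N}$ share weak limits. That assumption, together with Step 2 and the hypothesis, concludes the proof directly.
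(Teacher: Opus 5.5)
Your main line is correct and has the same architecture as the paper's proof:
\begin{enumerate}
\item fix a full-measure set of sequences;
\item show the decoder at the observed summary converges to $\mathbb{P}_{\mathbf{x}|\theta_0}$ by pushing $\mathbb{P}_{\theta|\mathbf{s}_N}\Rightarrow\delta_{\theta_0}$ through the mixture (the paper does this via the generative form $G_\theta^\#\mathbb{U}$ with $G_\theta$ continuous in $\theta$);
\item use minimality of $\mathbf{s}^*_N$ and the triangle inequality;
\item finish with identifiability of mixtures.
\end{enumerate}
There are three differences.

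First, the paper removes your caveat about the mixture identity by construction. The decoder is the predictive law $\mathbb{P}^N_{\mathbf{x}_0|\mathbf{s}}$ of a fresh draw $\mathbf{x}_0$, conditionally independent of $\mathbf{s}_N$ given $\theta$. Hence $\mathbb{P}^N_{\mathbf{x}_0|\mathbf{s}}=\int\mathbb{P}_{\mathbf{x}|\theta}\,d\mathbb{P}^N_{\theta|\mathbf{s}}$ holds exactly for every $\mathbf{s}$. You should adopt this rather than try to ``absorb a vanishing term''. You need the identity at $\mathbf{s}^*_N$, which is an arbitrary minimiser rather than a typical summary value, so no such control is available there.

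Second, the paper's identifiability assumption is kernel-free: if $\int\mathbb{P}_{\mathbf{x}|\theta}\,d\mathbb{T}_N\Rightarrow\mathbb{P}_{\mathbf{x}|\theta_0}$ then $\mathbb{T}_N\Rightarrow\delta_{\theta_0}$. To use it, the paper requires a bounded, continuous, integrally strictly positive definite kernel, so that MMD metrises weak convergence. It also invokes Varadarajan's theorem where you use concentration plus Borel--Cantelli. Your exposed-point condition is a concrete sufficient condition for the paper's assumption. With a bounded continuous kernel, weak convergence of the mixtures gives convergence of their embeddings, and your test against $h$ then controls the mass outside each $\eta$-ball. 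Because you argue entirely in $\mathcal{H}$, you never need the implication ``MMD $\to 0$ implies weak convergence''. So your version trades a weaker kernel requirement for a stronger, kernel-dependent identifiability condition, while the paper's gives a cleaner model-level hypothesis.

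Third, (A1), (A2) and your fallback play no role and are best dropped. The paper never compares $\mathbf{s}^*_N$ with $\mathbf{s}_N$ in summary space. Your fallback is an assumption about the $N$-dependent summary posteriors that nearly restates the conclusion. The point of the argument is that identifiability of the model $\{\mathbb{P}_{\mathbf{x}|\theta}\}$ alone suffices once the mixture representation is exact.
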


\begin{proof}
See Appendix \ref{sec:proof_consistency}
\end{proof}

\paragraph{Proof Sketch} The proof shows that the predictive mixture distribution induced by $\mathbb{P}_{\theta | \mathbf{s}^*_N}$ converges to the true data-generating distribution $\mathbb{P}_{\theta_0}$, and then uses a strong identifiability assumption to ensure $\mathbb{P}_{\theta | \mathbf{s}^*_N}$ converges to $\delta_{\theta_0}$.

\section{Related Works}
As discussed in Section~\ref{bg-rsbi}, there is a growing body of robust SBI approaches, and MDS fits most naturally as a \emph{robust summaries} method \citep{kellySimulationbasedBayesianInference2025b}, but differs in its focus as a lightweight, test-time robust method that preserves amortization. A related line of work is also motivated by minimizing the simulation-gap with a modified summary statistic as with MDS, but does so by introducing a regularization term during joint training of a NPE with a neural summary statistic network \citep{huangLearningRobustStatistics2023g, elsemullerDoesUnsupervisedDomain2025c}, which can be framed as a form of unsupervised domain adaptation. In contrast, MDS minimizes the simulation-gap in data-space and preserves amortization by adapting purely on the summary queries. A complementary direction introduces explicit error models to account for the simulation-gap \citep{wardRobustNeuralPosterior2022a, kelly2024misspecification}. The robust NPE of \citet{wardRobustNeuralPosterior2022a} augments NPE with a spike-and-slab error model, and performs inference by integrating over latent variables. While statistically principled, such approaches require additional modeling assumptions and nontrivial test-time computations (additional MCMC over latent variables). Our proposed MDS, by leveraging the MMD, is model-free, and has a lightweight test-time adaptation in the form of a deterministic optimization problem. 

\section{Numerical Experiments}
\label{sec:exp}

In this section, we empirically evaluate our proposed minimum-distance summary with the MMD, focusing on the lightweight decoder mean embedding approach as discussed in Section~\ref{sec:mmd-mds}. For all experiments, we adopt Huber's contamination model, where the observed test data \( \tilde{\mathbf{x}}_{1:N} \) is drawn i.i.d. from \( \tilde{\mathbb{P}} = \varepsilon \mathbb{Q} + (1- \varepsilon) \mathbb{P}_{\theta^*} \), where \( \varepsilon \in \{0.0, 0.1, \ldots, 0.5 \} \), \( \theta^* \) is the ground-truth parameter and \( \mathbb{Q} \) is a task-specific contamination. For all experiments presented in this section, we use \( N = 100 \) i.i.d. observations in the datasets. We focus on tasks with a fixed summary statistic, except for the Gaussian setup in Section~\ref{exp:gaussian}, which uses a learned neural summary statistic. 

Throughout our experiments, we maintain a standard NPE which is used as a non robust baseline. We implement the noisy NPE (NNPE) from \citet{wardRobustNeuralPosterior2022a}, which is based on a spike-and-slab error model, an alternate outlier removal baseline method, which pre-processes test observations with outlier detection, which we call NPE-OR, and finally, for the Gaussian setup, the NPE-RS of \citet{huangLearningRobustStatistics2023g, elsemullerDoesUnsupervisedDomain2025c}. Further details and additional results of all experiments are provided in Appendix~\ref{app:exp}. We use the misspecification detection calibration scheme as discussed in Appendix~\ref{app:detect_misspec}, which we find improves the performance for the well-specified test examples, while still maintaining the robustness from adaptation during contamination. Further ablations with respect to the sample size and data dimension is provided in Appendix~\ref{app:gauss-ablation}. \footnote{
Our implementation is publicly available \href{https://github.com/Shermjj/Minimum-Distance-Summaries}{here}
}

\begin{figure}[!htbp]
  \centering
  \includegraphics[width=\columnwidth]{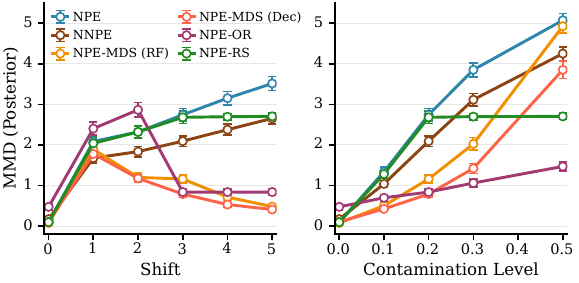}
  \caption{Bivariate Gaussian model with outlier contamination, comparing posterior samples against the groundtruth posterior distribution with MMD. Left: Increasing outlier magnitude. Right: Increasing contamination proportion.}
  \label{fig:main-3}
\end{figure}

\subsection{Gaussian Model}
\label{exp:gaussian}

We consider a conjugate bivariate Gaussian location model, with a Gaussian prior, which provides an analytic posterior distribution as a ground truth reference distribution. We induce contamination by replacing data points with outliers, with equal probability in both the positive and negative direction, with increasing magnitude of both the outlier points, and the proportion of contamination.
We further implement both the MDS with the full decoder model [\emph{MDS (Dec)}] and MDS with the random Fourier feature based decoder mean embedding [\emph{MDS (RF)}]. From Figure~\ref{fig:main-3}, we can see that, in general, our MDS improves robustness compared to other benchmark methods, although for severe proportion of contamination, MDS can degrade. Theoretically, we expect the MDS to be guaranteed only for relatively small levels of contamination. Furthermore, note that on the left plot of Figure~\ref{fig:main-3}, we see that there is a sudden drop in NPE-MDS and NPE-OR methods despite increasing outlier shift. We attribute this to the misspecification detection procedure resulting in false negatives for smaller levels of outlier shift. We furthermore evaluate the same task on the NPE-PFN model \citep{vetterEffortlessSimulationEfficientBayesian2025b}, and show that the MDS can provide test-time robustness in a similar fashion for foundation model-based SBI approaches, which is provided in Appendix~\ref{app:gauss-pfn}.

\subsection{Time-Series Examples}
\label{exp:time-series}

\begin{figure*}[!htbp]
  \centering
  \includegraphics[width=0.85\textwidth]{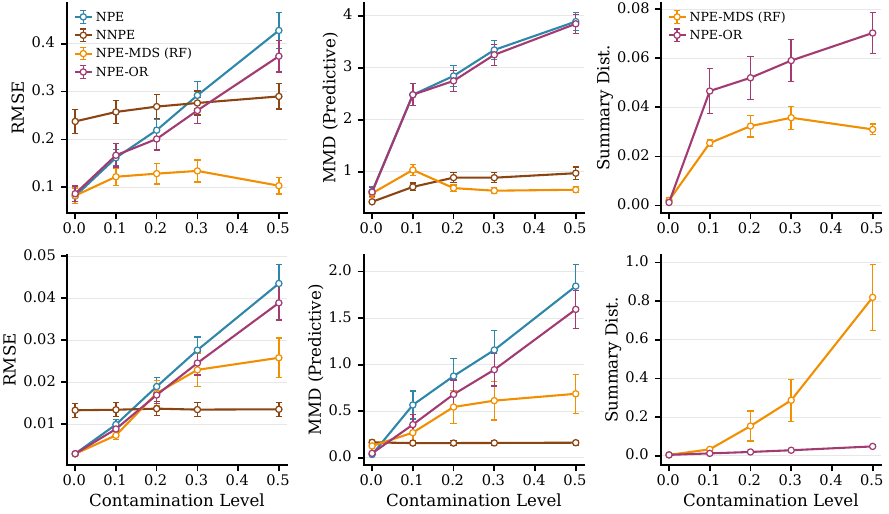}
  \caption{Time-series models with increasing contamination proportion, measuring the RMSE against the true parameter (\textbf{left}), posterior predictive against uncontaminated observations with MMD (\textbf{center}), and distance of adapted summary to the uncontaminated (oracle) summary statistic (\textbf{right}). Top: Ornstein-Uhlenbeck process. Bottom: SIR model. }
  \label{fig:main-4}
\end{figure*}

We evaluate two time-series models with progressing levels of difficulty based on the dimensionality of the data-space. 

\paragraph{Ornstein-Uhlenbeck Process} We consider an OUP with a two-dimensional parameter $\theta$ and trajectories of length $T=25$, following a similar setup to \citet{huangLearningRobustStatistics2023g}. The misspecification is induced by contaminating observation trajectories generated with the ground truth parameter \( \theta^* \) with outlier samples generated from an OUP model with an alternate contamination parameter \( \theta_c \). As shown in the top row of Figure~\ref{fig:main-4}, MDS-MMD yields consistently improved robustness across contamination levels. Notably, even though the MDS objective minimizes the simulation-gap in the data space, the adapted summary remains close to the clean, uncontaminated observation summary.

\paragraph{SIR Model} Next, we consider the SIR model from \citet{wardRobustNeuralPosterior2022a}, with a two-dimensional parameter space and trajectories of length $T=365$. This follows the same Huber contamination model setup as described previously, but with a task-specific contamination distribution $\mathbb{Q}$,  where we apply a weekend-reporting delay to the observed time series, providing a structural form of misspecification. With reference to the bottom row of Figure~\ref{fig:main-4}, MDS improves upon the standard NPE and NPE-OR, but does not match NNPE. Interestingly, even when the adapted summary deviates from the oracle summary under stronger misspecification, the degradation in posterior metrics is limited, which we attribute to partial non-identifiability of the summary statistic for this model. Additional experiments in Appendix~\ref{app:sir-structured-misspecification} show that the robustness benefits of MDS is retained even as the fraction of contamination increases up to $\varepsilon = 1.0$.

\subsection{Cryo-EM Inference}
\label{exp:cryo-em}

Finally, we consider a realistic and challenging simulator model, based on cryo-electron microscopy (cryo-EM) \citep{dingeldeinAmortizedTemplateMatching2025}. Here, each observation is a $32\times 32$ image ($d_\mathbf{x}=1024$) obtained by projecting a 3D biomolecule under unknown orientation, shift, and measurement noise. The goal is to infer a one-dimensional shape index from the image. We induce misspecification by replacing an $\varepsilon$ fraction of images in the test observations with Gaussian noise (Figure~\ref{fig:main-6}).
Despite the high dimensionality and the severe nature of the contamination, MDS is able to substantially improve the robustness of the NPE to contamination (Figure~\ref{fig:main-5}).

\begin{figure}[!htbp]
  \centering
  \includegraphics[width=\columnwidth]{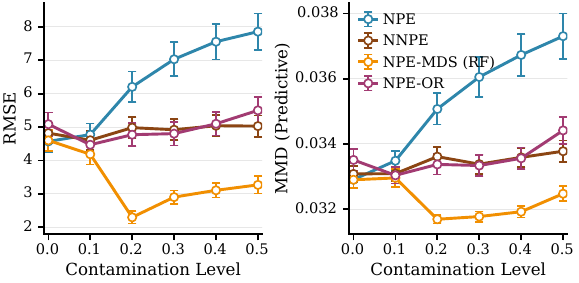}
  \caption{Cryo-EM inference. Left: RMSE against true parameter. Right: Posterior predictive against uncontaminated observations.}
  \label{fig:main-5}
\end{figure}

\begin{figure}[!htbp]
  \centering
  \includegraphics[width=\columnwidth]{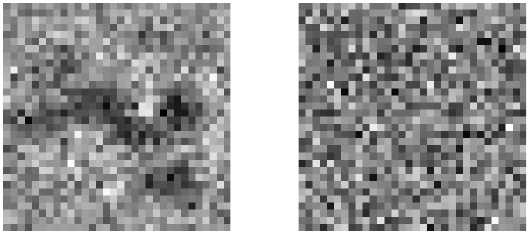}
  \caption{Left: Projected image with HSP90 model. Right: Gaussian noise contamination.}
  \label{fig:main-6}
\end{figure}

\section{Conclusion}
We introduced minimum-distance summaries (MDS), a lightweight, test-time robustness procedure for pretrained NPEs. MDS provides an adapted summary by minimizing a \emph{robust} divergence between the data distribution and observations, mitigating misspecification, and crucially, obtains the robust adapted summary independently of the pretrained NPE. Thus, MDS is able to preserve the amortization property of the NPE, providing a modular approach that can be flexibly integrated in a broader statistical workflow. By leveraging the MMD as the divergence, we are able to provide a model-free, lightweight, and robust objective for the MDS. We establish theoretical guarantees of the MDS posterior consistency and stability under contamination. Across a range of synthetic and high-dimensional benchmarks, MDS improves robustness under contamination while maintaining strong performance in the well-specified regime.

\paragraph{Limitations} MDS is currently tailored towards a pretrained NPE utilizing summary statistics, and so does not apply to methods using the full data directly without any summary statistics. Since the MDS divergence operates on the data-space, for high-dimensional or structured data, specific choices of MMD kernels or learned feature representation may be necessary. 

\paragraph{Future Works} While the MDS is a general framework for any divergence, we focused on the MMD, and an interesting future direction would be to explore problem settings motivating the use of alternate divergences, such as the Stein or Fisher divergence. Investigating the use of MDS for other amortized SBI methods, such as neural likelihood estimation, would also be useful to further broaden the applicability of MDS. Furthermore, extending the MDS to sequential NPE could be further considered. Naively, this would involve first estimating the MDS initially, then using the estimated MDS for subsequent sequential training. However, further methodological development could incorporate an adaptive MDS within the sequential NPE training.

\section*{Acknowledgements}

SK was supported by the EPSRC Center for Doctoral Training in Computational Statistics and Data Science, grant number EP/S023569/1.
The authors thanks Sam Power, François-Xavier Briol and Charita Dellaporta for helpful discussions, as well as the three anonymous reviewers for their insightful comments. 

\section*{Impact Statement}


This paper presents work whose goal is to advance the field of Machine
Learning. There are many potential societal consequences of our work, none
which we feel must be specifically highlighted here.


\bibliography{mds}
\bibliographystyle{icml2026}

\newpage
\appendix
\onecolumn

\section{Assumptions and Proofs for Theorems 4.1 and 4.2} \label{app:proofs}

The main paper uses the notation $\mathbb{P}_{\theta}$ for the distribution of $\mathbf{x}$
under a particular choice of $\theta$.
In this appendix we instead write $\mathbb{P}_{\mathbf{x}|\theta}$.
This is to make it more obvious which distribution this refers to,
and to be consistent with notation for other conditional distributions.

\subsection{Assumptions for Theorem \ref{thm:robustness} (Robustness)} \label{app:robustness assumptions}

First we have assumptions to use results from \citet{sprungkLocalLipschitzStability2020a}.
\begin{enumerate}
  \item \label{itm:sprungk_first}
  $\mathbb{P}_{\theta|\mathbf{s}}$ has a Lebesgue density proportional to $\pi(\theta) \exp[-\Phi_\mathbf{s}(\theta)]$
  where $\Phi_\mathbf{s}: \Theta \to \mathbb{R}$.
  \item \label{itm:sprungk_last}
  For any $\mathbf{s} \in \mathcal{S}$,
  $\essinf_\mu \Phi_{\mathbf{s}} = 0$,
  where $\mu$ is the prior measure.
  Here, $\essinf_\mu \Phi_{\mathbf{s}} := \sup \{ a \in \mathbb{R} \vert \mu( \{ \Phi_{\mathbf{s}}(\theta) < a \} ) = 0 \}$.
\end{enumerate}
\begin{remark}
Here $\Phi_\mathbf{s}(\theta)$ acts as a negative log-likelihood based on observing $\mathbf{s}$.
Assumption \ref{itm:sprungk_last} essentially requires that, for any $\mathbf{s}$,
the corresponding likelihood is bounded above almost everywhere under the prior.
Then $k(\mathbf{s}) := \essinf_\mu \Phi_{\mathbf{s}}$ is
the resulting greatest lower bound on $\Phi_{\mathbf{s}}$.
So $\Phi'_{\mathbf{s}}(\theta) := \Phi_{\mathbf{s}}(\theta) - k(\mathbf{s})$
satisfies assumption \ref{itm:sprungk_last}
while giving the same $\mathbb{P}_{\theta|\mathbf{s}}$.
\end{remark}

We introduce assumptions related to the sensitivity of the log-likelihood to $\mathbf{s}$.
\begin{enumerate}[resume]
  \item For any $\theta \in \Theta$,
  $\Phi_\mathbf{s}(\theta)$ is a differentiable function in $\mathbf{s}$.
  Let $h(\mathbf{t},\theta) = \nabla_\mathbf{s} \Phi_\mathbf{s}(\theta) \vert_{\mathbf{s} = \mathbf{t}}$.
  \item \label{itm:log_likelihood_sensitivity}
  For any compact $K \subseteq \mathcal{S}$ there exists $g_K(\theta)$ such that
  $\sup_{\mathbf{t} \in K} || h(\mathbf{t}, \theta) || \leq g_K(\theta)$
  and $\int g_{K}(\theta) \pi(\theta) d\theta < \infty$.
  \item \label{itm:convexity}
  The summary statistic space $\mathcal{S} \subseteq \mathbb{R}^{d_\mathbf{s}}$ is a convex set.
\end{enumerate}

Next we state regularity conditions on $\mathbb{P}_{\mathbf{x}|\mathbf{s}}$
to prove robustness of $\mathbf{s}^*(\mathbb{Q})$ (Lemma \ref{lem:s_influence} below).
\begin{enumerate}[resume]
  \item \label{itm:infl1} The MMD kernel $k$ is bounded.
  \item \label{itm:infl2} $\mathbb{P}_{\mathbf{x}|\mathbf{s}}$ has a Lebesgue density $p_\mathbf{s}(\mathbf{x})$.
  \item For all $\mathbf{x}$, $p_\mathbf{s}(\mathbf{x}) \in C^2(\mathcal{S})$.
  \item \label{itm:deriv_bound} For all $i$,
  $\int \sup_{\mathbf{s} \in \mathcal{S}} \left| \frac{\partial}{\partial \mathbf{s}^i} p_\mathbf{s}(\mathbf{x}) \right| d\mathbf{x} < \infty$.
  \item \label{itm:non_sing}
  We consider $\mathbb{Q}$ such that $M(\mathbf{s}^*(\mathbb{Q}))$ is non-singular, where
  \begin{align*}
    M(\mathbf{t}) &:= \int H_\mathbf{s} \xi(\mathbf{y}, \mathbb{P}_{\mathbf{x}|\mathbf{s}}) \Big|_{\mathbf{s}=\mathbf{t}} p_{\mathbf{s}}(\mathbf{y}) d\mathbf{y}, \\
    \xi(\mathbf{y}, \mathbb{P}_{\mathbf{x}|\mathbf{s}}) &:= k(\mathbf{y}, \mathbf{y})
    - 2 \int k(\mathbf{y}, \mathbf{z}) p_\mathbf{s}(\mathbf{z}) d\mathbf{z}
    + \int k(\mathbf{z}, \mathbf{z}') p_\mathbf{s}(\mathbf{z}) p_\mathbf{s}(\mathbf{z}') d\mathbf{z} d\mathbf{z}',
  \end{align*}
  and $H_\mathbf{s}$ denotes the Hessian with respect to $\mathbf{s}$.
\end{enumerate}
\begin{remark}
  Assumptions \ref{itm:infl2}--\ref{itm:deriv_bound} are based on \citet{key2025composite} (their assumption 3).
  \citet{briolStatisticalInferenceGenerative2019b} give alternatives (their assumptions i--iii) 
  which are instead based on a generative form of $\mathbb{P}_{\mathbf{x}|\mathbf{s}}$.
  Assumption \ref{itm:non_sing} relates to a local identifiability condition
  (see \citealp{newey1994large}, discussion under Theorem 3.1).
\end{remark}

\subsection{Influence Function Lemma}

Define the influence function for $\mathbf{s}^*$ as
\[
\InF(\mathbf{y}; \mathbb{Q})
:= \lim_{\epsilon \to 0} \frac{1}{\epsilon} [\mathbf{s}^*(\mathbb{Q}_{\epsilon, \mathbf{y}}) - \mathbf{s}^*(\mathbb{Q})].
\]
Our main proof uses the following lemma,
which is part of Theorem 5 from \citet{briolStatisticalInferenceGenerative2019b}.
Since we use different assumptions, we provide a proof for completeness.

\begin{lemma} \label{lem:s_influence}
  Under assumptions \ref{itm:infl1}--\ref{itm:non_sing},
  $\sup_{\mathbf{y} \in \mathcal{X}} || \InF(\mathbf{y}; \mathbb{Q}) || < \infty$.
\end{lemma}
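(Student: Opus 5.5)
We follow the standard M-estimator influence function argument, as in Theorem 5 of \citet{briolStatisticalInferenceGenerative2019b}. Write the squared MMD objective as an integral against the target, $L(\mathbf{s};\mathbb{Q}) := \mathrm{MMD}^2(\mathbb{P}_{\mathbf{x}|\mathbf{s}},\mathbb{Q})$. Expanding the squared RKHS norm gives
\[
L(\mathbf{s};\mathbb{Q}) = \int\!\!\int k\,d\mathbb{Q}\,d\mathbb{Q} - 2\int\!\!\int k(\mathbf{y},\mathbf{z})p_\mathbf{s}(\mathbf{z})\,d\mathbf{z}\,d\mathbb{Q}(\mathbf{y}) + \int\!\!\int k\,p_\mathbf{s}\,p_\mathbf{s}.
\]
The first term does not depend on $\mathbf{s}$, so $\nabla_\mathbf{s} L(\mathbf{s};\mathbb{Q}) = \int \nabla_\mathbf{s}\xi(\mathbf{y},\mathbb{P}_{\mathbf{x}|\mathbf{s}})\,d\mathbb{Q}(\mathbf{y})$, with $\xi$ as in assumption \ref{itm:non_sing}. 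The minimizer $\mathbf{s}^*(\mathbb{Q})$ satisfies the first-order condition $\int \nabla_\mathbf{s}\xi(\mathbf{y},\mathbb{P}_{\mathbf{x}|\mathbf{s}^*})\,d\mathbb{Q}(\mathbf{y}) = 0$. (This uses interior-ness; convexity of $\mathcal{S}$ is available, and we take $\mathbf{s}^*$ interior.)

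First we justify differentiating under the integral. By assumption \ref{itm:infl1}, $|k|\le C$. Hence
\[
\Big|\partial_{\mathbf{s}^i}\int k(\mathbf{y},\mathbf{z})p_\mathbf{s}(\mathbf{z})\,d\mathbf{z}\Big| \le C\int \sup_\mathbf{s}|\partial_{\mathbf{s}^i} p_\mathbf{s}| < \infty
\]
by assumption \ref{itm:deriv_bound} and dominated convergence. The same bound applies to the double integral via the product rule, using $\int p_\mathbf{s}=1$. This gives $\sup_{\mathbf{y},\mathbf{s}}\|\nabla_\mathbf{s}\xi(\mathbf{y},\mathbb{P}_{\mathbf{x}|\mathbf{s}})\|<\infty$, and the bound is uniform in $\mathbf{y}$ because $\mathbf{y}$ enters only through the bounded kernel. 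Second derivatives are handled using $p_\mathbf{s}\in C^2$ in the same way, giving continuity of the Hessian in $\mathbf{s}$.

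Next, for the contaminated target, the minimizer $\mathbf{s}_\epsilon := \mathbf{s}^*(\mathbb{Q}_{\epsilon,\mathbf{y}})$ satisfies
\[
(1-\epsilon)\int \nabla_\mathbf{s}\xi(\mathbf{u},\mathbb{P}_{\mathbf{x}|\mathbf{s}_\epsilon})\,d\mathbb{Q}(\mathbf{u}) + \epsilon\,\nabla_\mathbf{s}\xi(\mathbf{y},\mathbb{P}_{\mathbf{x}|\mathbf{s}_\epsilon}) = 0.
\]
We view this as $F(\mathbf{s},\epsilon)=0$ with $F$ jointly $C^1$ near $(\mathbf{s}^*(\mathbb{Q}),0)$. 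By the implicit function theorem, if $\partial_\mathbf{s}F$ is invertible at $\epsilon=0$, then $\mathbf{s}_\epsilon$ is differentiable in $\epsilon$, and
\[
\InF(\mathbf{y};\mathbb{Q}) = -\big[\partial_\mathbf{s}F\big]^{-1}\Big(\nabla_\mathbf{s}\xi(\mathbf{y},\mathbb{P}_{\mathbf{x}|\mathbf{s}^*}) - \int\nabla_\mathbf{s}\xi\,d\mathbb{Q}\Big) = -\big[\partial_\mathbf{s}F\big]^{-1}\nabla_\mathbf{s}\xi(\mathbf{y},\mathbb{P}_{\mathbf{x}|\mathbf{s}^*}),
\]
where the integral term vanishes by the first-order condition. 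The matrix $\partial_\mathbf{s}F$ is the Hessian of $L$ at $\mathbf{s}^*$, and we identify it with $M(\mathbf{s}^*(\mathbb{Q}))$, which is nonsingular by assumption \ref{itm:non_sing}. Taking the supremum over $\mathbf{y}$ and using the uniform gradient bound then gives $\sup_\mathbf{y}\|\InF\| \le \|M^{-1}\|\sup_\mathbf{y}\|\nabla_\mathbf{s}\xi\|<\infty$.

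The main obstacle is the implicit function step: legitimately identifying the limit defining $\InF$ with the implicit derivative. This requires the selected minimizer $\mathbf{s}_\epsilon$ to be the branch through $\mathbf{s}^*(\mathbb{Q})$, i.e.\ continuity $\mathbf{s}_\epsilon\to\mathbf{s}^*(\mathbb{Q})$. There is also a mild mismatch: $M$ as written integrates the Hessian of $\xi$ against $p_{\mathbf{s}}$ rather than against $\mathbb{Q}$, so the Hessian of $L$ must be reconciled with $M(\mathbf{s}^*(\mathbb{Q}))$. For continuity of the branch, we would try a uniform convergence argument first: $\sup_\mathbf{s}|L(\mathbf{s};\mathbb{Q}_{\epsilon,\mathbf{y}})-L(\mathbf{s};\mathbb{Q})|\le 4C\epsilon$ by boundedness of $k$. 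Combined with local identifiability from the nonsingular Hessian, this gives local uniqueness and continuity of the minimizer. Failing global uniqueness, we would interpret $\mathbf{s}^*$ as the locally defined root given by the implicit function theorem.
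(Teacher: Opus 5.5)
Your proposal follows the paper's proof, with one step made explicit. The paper does not derive the influence function. It quotes $\InF(\mathbf{y};\mathbb{Q}) = M(\mathbf{s}^*(\mathbb{Q}))^{-1}\nabla_\mathbf{s}\xi(\mathbf{y},\mathbb{P}_{\mathbf{x}|\mathbf{s}})|_{\mathbf{s}=\mathbf{s}^*(\mathbb{Q})}$ from \citet{dawid2014theory}. It then does what you do: assumption~\ref{itm:deriv_bound} with dominated convergence, plus boundedness of $k$, gives $\sup_\mathbf{y}\|\nabla_\mathbf{s}\xi\| \le 4\sup k\sum_i\int|\partial_{\mathbf{s}^i}p_\mathbf{s}|$. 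Your implicit-function-theorem derivation replaces that citation, and your extra minus sign does not affect the norm.

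The mismatch you flag is real, and the paper does not resolve it either. Your derivation correctly gives $\partial_\mathbf{s}F = \int H_\mathbf{s}\xi(\mathbf{u},\mathbb{P}_{\mathbf{x}|\mathbf{s}})|_{\mathbf{s}=\mathbf{s}^*}\,d\mathbb{Q}(\mathbf{u})$, the Hessian of $L(\cdot;\mathbb{Q})$. By contrast, $M(\mathbf{s}^*)$ integrates against the model density $p_{\mathbf{s}^*}$. The $\mathbf{u}$-independent part of $H_\mathbf{s}\xi$ cancels, so the two matrices differ by $-2\int\!\!\int k(\mathbf{u},\mathbf{z})\,H_\mathbf{s}p_{\mathbf{s}^*}(\mathbf{z})\,d\mathbf{z}\,d(\mathbb{Q}-\mathbb{P}_{\mathbf{x}|\mathbf{s}^*})(\mathbf{u})$. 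In general they agree only when $\mathbb{Q}=\mathbb{P}_{\mathbf{x}|\mathbf{s}^*(\mathbb{Q})}$, which is the well-specified setting where the Dawid/Briol formula is stated.

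So you should not ``identify'' $\partial_\mathbf{s}F$ with $M(\mathbf{s}^*(\mathbb{Q}))$. For general $\mathbb{Q}$, the nonsingularity your argument needs is that of the $\mathbb{Q}$-Hessian, and the paper's proof implicitly needs the same thing. Assumption~\ref{itm:non_sing} has to be read that way, since nonsingularity of $M$ as written does not imply it. With that reading your proof closes.

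The other regularity you mention is also left implicit in the paper, absorbed into the citation:
\begin{itemize}
\item domination of second $\mathbf{s}$-derivatives of $p_\mathbf{s}$, so that $F$ is $C^1$ (assumption~\ref{itm:deriv_bound} controls only first derivatives);
\item $\mathbf{s}^*(\mathbb{Q}_{\epsilon,\mathbf{y}})\to\mathbf{s}^*(\mathbb{Q})$, so that the selected minimizer is the implicit-function-theorem branch.
\end{itemize}
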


\begin{proof}
  From \citet{dawid2014theory},
  \[
  \InF(\mathbf{y}; \mathbb{Q})
  = M(\mathbf{s}^*(\mathbb{Q}))^{-1} \nabla_\mathbf{s} \xi(\mathbf{y}, \mathbb{P}_{\mathbf{x}|\mathbf{s}}) \Big|_{\mathbf{s} = \mathbf{s}^*(\mathbb{Q})}.
  \]
  From assumption \ref{itm:non_sing} it suffices to show that $\sup_{\mathbf{y}} || \nabla_\mathbf{s} \xi(\mathbf{y}, \mathbb{P}_{\mathbf{x}|\mathbf{s}}) || < \infty$.
  To show this, first note that assumption \ref{itm:deriv_bound} and the dominated convergence theorem
  allow interchange of differentiation and integration to give
  \[
  \nabla_\mathbf{s} \xi(\mathbf{y}, \mathbb{P}_{\mathbf{x}|\mathbf{s}}) =
  - 2 \int k(\mathbf{y}, \mathbf{z}) \nabla_\mathbf{s} p_\mathbf{s}(\mathbf{z}) d\mathbf{z}
  + 2 \int k(\mathbf{z}, \mathbf{z}') \nabla_\mathbf{s} p_\mathbf{s}(\mathbf{z}) p_\mathbf{s}(\mathbf{z}') d\mathbf{z} d\mathbf{z}'.
  \]
  Applying the triangle inequality,
  \begin{align*}
  \left\| \nabla_\mathbf{s} \xi(\mathbf{y}, \mathbb{P}_{\mathbf{x}|\mathbf{s}}) \right\| & \leq
  2 \left\| \int k(\mathbf{y}, \mathbf{z}) \nabla_\mathbf{s} p_\mathbf{s}(\mathbf{z}) d\mathbf{z} \right\|
  + 2 \left\| \int k(\mathbf{z}, \mathbf{z}') \nabla_\mathbf{s} p_\mathbf{s}(\mathbf{z})  p_\mathbf{s}(\mathbf{z}') d\mathbf{z} d\mathbf{z}' \right\| \\
  & \leq 2 \sum_{i=1}^{d_\mathbf{s}} \left[ \int  \left| k(\mathbf{y}, \mathbf{z}) \frac{\partial}{\partial \mathbf{s}^i} p_\mathbf{s}(\mathbf{z}) \right| d\mathbf{z}
  + \int \left| k(\mathbf{z}, \mathbf{z}') \frac{\partial}{\partial \mathbf{s}^i} p_\mathbf{s}(\mathbf{z}) p_\mathbf{s}(\mathbf{z}') \right| d\mathbf{z} d\mathbf{z}' \right].
  \end{align*}
  Hence
  \[
  \left\| \nabla_\mathbf{s} \xi(\mathbf{y}, \mathbb{P}_{\mathbf{x}|\mathbf{s}}) \right\| \leq
  4 \sup_{\mathbf{z}, \mathbf{z}'} k(\mathbf{z}, \mathbf{z}') \sum_{i=1}^{d_\mathbf{s}} \int \left| \frac{\partial}{\partial \mathbf{s}^i} p_\mathbf{s}(\mathbf{z}) \right| d\mathbf{z}.
  \]
  From the assumptions, the right hand side is a finite bound for all $\mathbf{y}$, as required.
\end{proof}

\subsection{Proof of Theorem \ref{thm:robustness} (Robustness)} \label{sec:robustness_proof}

For $\mathbf{s}, \mathbf{t} \in \mathcal{S}$, consider
\[
||\Phi_{\mathbf{s}}(\theta) - \Phi_{\mathbf{t}}(\theta)||_{L^1_\mu}
:= \int |\Phi_\mathbf{s}(\theta) - \Phi_\mathbf{t}(\theta)| \pi(\theta) d\theta.
\]
By the mean value theorem, $\Phi_{\mathbf{s}}(\theta) - \Phi_{\mathbf{t}}(\theta) = (\mathbf{s}-\mathbf{t})^T h(\mathbf{u}(\theta), \theta)$,
where $\mathbf{u}(\theta) = a(\theta) \mathbf{s} + [1-a(\theta)] \mathbf{t}$
for some $a:\Theta \to [0,1]$.
By assumption \ref{itm:convexity}, $\mathbf{u}(\theta) \in\mathcal{S}$.
Then
\begin{equation} \label{eq:rob1}
||\Phi_{\mathbf{s}}(\theta) - \Phi_{\mathbf{t}}(\theta)||_{L^1_\mu}
\leq ||\mathbf{s}-\mathbf{t}||
\int || h(\mathbf{u}(\theta), \theta) || \pi(\theta) d\theta.
\end{equation}
Consider $\mathbf{t} \in K(\mathbf{s}) := \{ \mathbf{t} : ||\mathbf{s} - \mathbf{t}|| \leq 1 \}$.
By assumption \ref{itm:log_likelihood_sensitivity}, the integral in \eqref{eq:rob1} is finite, so
\begin{equation} \label{eq:rob2}
||\Phi_{\mathbf{s}}(\theta) - \Phi_{\mathbf{t}}(\theta)||_{L^1_\mu}
\leq k_1(\mathbf{s}) ||\mathbf{s}-\mathbf{t}||
\end{equation}
for some $k_1:\mathcal{S} \to \mathbb{R}$.

Under assumptions \ref{itm:sprungk_first}--\ref{itm:sprungk_last}, we can apply Theorem 11 of \citet{sprungkLocalLipschitzStability2020a}\footnote{
  The theorem gives (in our notation) a factor $k_2(\mathbf{s}, \mathbf{t})$.
  Equation (4) of \citet{sprungkLocalLipschitzStability2020a} and the surrounding discussion shows how this translates into $k_2(\mathbf{s})$ for a suitable set of $\mathbf{t}$ values.
} to get
\begin{equation} \label{eq:rob3}
\KL[
  \mathbb{P}_{\theta|\mathbf{s}},
  \mathbb{P}_{\theta|\mathbf{t}}
] \leq k_2(\mathbf{s})
||\Phi_{\mathbf{s}}(\theta) - \Phi_{\mathbf{t}}(\theta)||_{L^1_\mu}
\end{equation}
for some $k_2:\mathcal{S} \to \mathbb{R}$,
when $||\Phi_{\mathbf{s}}(\theta) - \Phi_{\mathbf{t}}(\theta)||_{L^1_\mu} \leq 1$.

Lemma \ref{lem:s_influence} gives:
\begin{equation} \label{eq:rob4}
\sup_{y \in \mathcal{X}} \lim_{\epsilon \to 0} \frac{1}{\epsilon}
||\mathbf{s}^*(\mathbb{Q}) - \mathbf{s}^*(\mathbb{Q}_{\epsilon, \mathbf{y}})||
< \infty.
\end{equation}
For the remainder of the proof we fix $\mathbb{Q}$.
From \eqref{eq:rob4}, there exists some $\eta:\mathcal{X} \to (0,\infty)$
such that $\epsilon < \eta(\mathbf{y})$ gives
$||\mathbf{s}^*(\mathbb{Q}) - \mathbf{s}^*(\mathbb{Q}_{\epsilon, \mathbf{y}})|| \leq 1/k_1(\mathbf{s}^*(\mathbb{Q}))$
and, by \eqref{eq:rob2},
$||\Phi_{\mathbf{s}^*(\mathbb{Q})}(\theta) - \Phi_{\mathbf{s}(\mathbb{Q_{\epsilon, \mathbf{y}}})}(\theta)||_{L^1_\mu} \leq 1$.
Then for $\epsilon < \eta(\mathbf{y})$,
\begin{align*}
\frac{1}{\epsilon} \KL[
\mathbb{P}_{\theta|\mathbf{s}^*(\mathbb{Q})},
\mathbb{P}_{\theta|\mathbf{s}^*(\mathbb{Q}_{\epsilon, \mathbf{y}})}
]
&\leq k_2(\mathbf{s}^*(\mathbb{Q}))
\frac{1}{\epsilon} ||\Phi_{\mathbf{s}^*(\mathbb{Q})}(\theta) - \Phi_{\mathbf{s}^*(\mathbb{Q}_{\epsilon, \mathbf{y}})}(\theta)||_{L^1_\mu}
&& \text{by \eqref{eq:rob3}} \\
&\leq k_1(\mathbf{s}^*(\mathbb{Q})) k_2(\mathbf{s}^*(\mathbb{Q}))
\frac{1}{\epsilon} ||\mathbf{s}^*(\mathbb{Q}) - \mathbf{s}^*(\mathbb{Q}_{\epsilon, \mathbf{y}})||
&& \text{by \eqref{eq:rob2}}.
\end{align*}
So for any $\mathbf{y} \in \mathcal{X}$,
\begin{equation*}
\lim_{\epsilon \to 0} \frac{1}{\epsilon} \KL[
\mathbb{P}_{\theta|\mathbf{s}^*(\mathbb{Q})},
\mathbb{P}_{\theta|\mathbf{s}^*(\mathbb{Q}_{\epsilon, \mathbf{y}})}]
\leq k_1(\mathbf{s}^*(\mathbb{Q})) k_2(\mathbf{s}^*(\mathbb{Q}))
\lim_{\epsilon \to 0} \frac{1}{\epsilon} ||\mathbf{s}^*(\mathbb{Q}) - \mathbf{s}^*(\mathbb{Q}_{\epsilon, \mathbf{y}})||.
\end{equation*}
Using \eqref{eq:rob4} gives the required result,
\begin{equation*}
\sup_{y \in \mathcal{X}} \lim_{\epsilon \to 0} \frac{1}{\epsilon} \KL[
\mathbb{P}_{\theta|\mathbf{s}^*(\mathbb{Q})},
\mathbb{P}_{\theta|\mathbf{s}^*(\mathbb{Q}_{\epsilon, \mathbf{y}})}
] < \infty.
\end{equation*}

\subsection{Assumptions for Theorem \ref{thm:consistency} (Consistency)} \label{app:consistency assumptions}

We assume:
\begin{enumerate}
  \item \label{itm:kernel}
  The MMD kernel $k$ is bounded, continuous and integrally strictly positive definite \citep{simon-gabrielMetrizingWeakConvergence2023}.
  \item \label{itm:generative1}
  $\mathbb{P}_{\mathbf{x}|\mathbf{\theta}}$ is a pushforward measure $G_\theta^\# \mathcal{U}$
  where $(\mathcal{U}, \mathcal{F}, \mathbb{U})$ is a probability measure
  and $G_\mathbf{\theta}: \mathcal{U} \to \mathcal{X}$ is measurable.
  \item \label{itm:generative2}
  For all $u \in \mathcal{U}$, $G_\theta(u)$ is a continuous function in $\mathbf{\theta}$.
  \item \label{itm:identifiability}
  Consider the sequence $\mathbb{T}_N \in \mathcal{P}(\theta)$ and the resulting mixture distributions
  $\mathbb{Q}_N(A) = \int \mathbb{P}_{\mathbf{x}|\theta}(A) d\mathbb{T}_N$.
  If $\mathbb{Q}_N$ weakly converges to $\mathbb{P}_{\mathbf{x}|\theta_0}$,
  then $\mathbb{T}_N$ weakly converges to $\delta_{\theta_0}$.
\end{enumerate}

\begin{remark} \label{rem:metrize}
Assumption \ref{itm:kernel} ensures that MMD metrizes weak convergence (see \citealp{simon-gabrielMetrizingWeakConvergence2023}, Theorem 1).
That is, $\mathbb{V}_N$ weakly converges to $\mathbb{V}$ if and only
$\mathrm{MMD}(\mathbb{V}_N, \mathbb{V}) \to 0$,
where the sequence $\mathbb{V}_N$ and $\mathbb{V}$ are in $\mathcal{P}(\mathcal{X})$.
\end{remark}

\subsection{Proof of Theorem \ref{thm:consistency} (Consistency)} \label{sec:proof_consistency}
We prove this in a particular setting.
Consider a parametric family $\mathbb{P}_\theta \in \mathcal{P}(\mathcal{X})$ for $\theta \in \Theta$.
Suppose that we have independent random variables $\mathbf{x}_i \sim \mathbb{P}_{\theta_0}$ for $i \in \mathbb{N}_0$ and some true parameter value $\theta_0 \in \Theta$.
Given $N$, we treat $\mathbf{x}_{1:N}$ as observable,and $\mathbf{x}_0$ as a separate test variable.
Let $\hat{\mathbb{P}}_N$ be the empirical distribution $\frac{1}{N} \sum_{i=1}^N \delta_{\mathbf{x}_i}$.
Let $\mathbf{s}_N = f_N(\mathbf{x}_{1:N})$ where $f_N:\mathcal{X}^{n} \to \mathcal{S}$.

For all $N$, our setting gives regular conditional distributions $\mathbb{P}^N_{\theta|\mathbf{s}} \in \mathcal{P}(\Theta)$ (summary posterior), and $\mathbb{P}^N_{\mathbf{x}_0 | \mathbf{s}} \in \mathcal{P}(\mathcal{X})$ (predictive decoder). These are the exact distributions given the random variables introduced above. This result does not consider approximation error.

The superscripts denote the dependence of these conditional distributions on $N$.

Define:
\[
\mathbf{s}_N^* = \argmin_{\mathbf{s} \in \mathcal{S}}
\mathrm{MMD}(\mathbb{P}^N_{\mathbf{x}_0 | \mathbf{s}}, \hat{\mathbb{P}}_N).
\]
We assume that this $\argmin$ exists
(as before, see \citet{briolStatisticalInferenceGenerative2019b} for conditions guaranteeing this).

We will need two weak convergence requirements\footnote{
  Recall the notation $\mathbb{P}_{\mathbf{x}|\theta}$ was introdcued at the start of Appendix \ref{app:proofs}.
}
\begin{align} 
\mathbb{P}_{\theta | \mathbf{s}_N} &\to \delta_{\theta_0}, \label{eq:s_consistency} \\
\hat{\mathbb{P}}_N &\to \mathbb{P}_{\mathbf{x}|\theta_0}. \label{eq:empirical}
\end{align}
From the theorem statement we will assume \eqref{eq:s_consistency} hold for almost every sequence $(\mathbf{x}_i)_{i \geq 1}$ sampled under $\theta_0$.
From Varadarajan's theorem (see \citealp{dudleyRealAnalysisProbability2018}, Theorem 11.4.1) the same is true for \eqref{eq:empirical}.
So it is almost sure that both hold.
For the remainder of the proof we fix $(\mathbf{x}_i)_{i \geq 1}$ such that this is the case.

\paragraph{Part 1}
We will show weak convergence of $\mathbb{P}^N_{\mathbf{x}|\mathbf{s}_N}$ to $\mathbb{P}_{\mathbf{x}|\theta_0}$.

Let $\theta_N$ be a random variable with distribution $\mathbb{P}^N_{\theta | \mathbf{s}_N}$.
Select any continuous bounded $h: \mathbb{R}^{d_x} \to \mathbb{R}$.
By assumption \ref{itm:generative2}, \eqref{eq:s_consistency} and the continuous mapping theorem,
$\plim h(G_{\theta_N}(\mathbf{u})) = h(G_{\theta_0}(\mathbf{u}))$.
Since $h$ is bounded we can apply dominated convergence to give
\[
\E[ h(G_{\theta_N}(\mathbf{u})) ] \to \E[ h(G_{\theta_0}(\mathbf{u})) ].
\]
Thus we get the required result
\[
\E_{\mathbb{P}^N_{\mathbf{x}|\mathbf{s}_N}}[ h(\mathbf{x}) ] \to
\E_{\mathbb{P}_{\mathbf{x}|\theta_0}}[ h(\mathbf{x}) ].
\]

\paragraph{Part 2}
We will show
$\mathrm{MMD}(\mathbb{P}^N_{\mathbf{x}|\mathbf{s}_N}, \hat{\mathbb{P}}_N ) \to 0$.

By the triangle inequality
\[
\mathrm{MMD}(\mathbb{P}^N_{\mathbf{x}|\mathbf{s}_N}, \hat{\mathbb{P}}_N)
\leq \mathrm{MMD}(\mathbb{P}^N_{\mathbf{x}|\mathbf{s}_N}, \mathbb{P}_{\mathbf{x}|\theta_0})
+ \mathrm{MMD}(\mathbb{P}_{\mathbf{x}|\theta_0}, \hat{\mathbb{P}}_N).
\]
Both terms converge to zero.
For the first this follows from Part 1 and Remark \ref{rem:metrize}.
For the second term this follows from \eqref{eq:empirical} and Remark \ref{rem:metrize}.

\paragraph{Part 3}
We will show
$\mathrm{MMD}(\mathbb{P}^N_{\mathbf{x}|\mathbf{s}^*_N}, \mathbb{P}_{\mathbf{x}|\theta_0}) \to 0$.

By the triangle inequality
\[
\mathrm{MMD}(\mathbb{P}^N_{\mathbf{x}|\mathbf{s}^*_N}, \mathbb{P}_{\mathbf{x}|\theta_0})
\leq \mathrm{MMD}(\mathbb{P}^N_{\mathbf{x}|\mathbf{s}^*_N}, \hat{\mathbb{P}}_N)
+ \mathrm{MMD}(\mathbb{P}_{\mathbf{x}|\theta_0}, \hat{\mathbb{P}}_N)
\]
Again, the second term converges to zero by \eqref{eq:empirical} and Remark \ref{rem:metrize}.
To show the first term converges to zero, define
\[
A_N = \mathrm{MMD}(\mathbb{P}^N_{\mathbf{x}|\mathbf{s}^*_N}, \hat{\mathbb{P}}_N), \quad
B_N = \mathrm{MMD}(\mathbb{P}^N_{\mathbf{x}|\mathbf{s}_N}, \hat{\mathbb{P}}_N).
\]
Then $A_N \geq 0$, and $A_N \leq B_N$ from the definition of $\mathbf{s}^*_N$.
Since $B_N \to 0$ by Part 2, we have $A_N \to 0$.

\paragraph{Part 4}

Finally,
$\mathbb{P}^N_{\theta | \mathbf{s}_N^*}$ weakly converges to $\delta_{\theta_0}$
by Part 3, assumption \ref{itm:identifiability} and Remark \ref{rem:metrize}.

\section{Choice of Divergence}
\label{app:div}

As discussed in Section~\ref{sec:mds}, the choice of divergence for the minimum-distance summaries objective, Equation~\eqref{eq:mds_def} is crucial, as our main goal is to obtain a \emph{robust} MDS. In order to do so, we have opted to use the MMD, which has strong robustness properties that is inherited with the MDS. Alternatively, when using the decoder model which estimates the summary-conditional data distribution with a flow-based model, such models allow for exact density evaluation, and we are able to use the forward KL divergence, which corresponds to maximum-likelihood. 

We provide empirical results comparing the KL-divergence based MDS with the MMD MDS \ref{sec:mmd-mds} in Figure~\ref{fig:app-1}, under the same Gaussian setup as Appendix~\ref{app:gauss}. Figure~\ref{fig:app-1.1} shows the results for increasing contamination level $\varepsilon$, and Figure~\ref{fig:app-1} shows the results for increasing contamination shift $\delta$. As we can see, this experiment verifies that the MMD is indeed robust, and that the KL is not a robust objective, as it performs similarly to the baseline NPE under contamination. Note that in Figure~\ref{fig:app-1}, we see that as the contamination shift increases, posterior RMSE improves. This is due to the behavior of the MMD, particularly the bounded Gaussian kernel. As the shift magnitude in the outliers increase relative to the inlier points, the influence of outliers is diminished, and so large-shift outliers affect the downstream accuracy of MDS less than moderately shifted outliers.

\begin{figure*}[!htbp]
  \centering
  \includegraphics[width=0.6\textwidth]{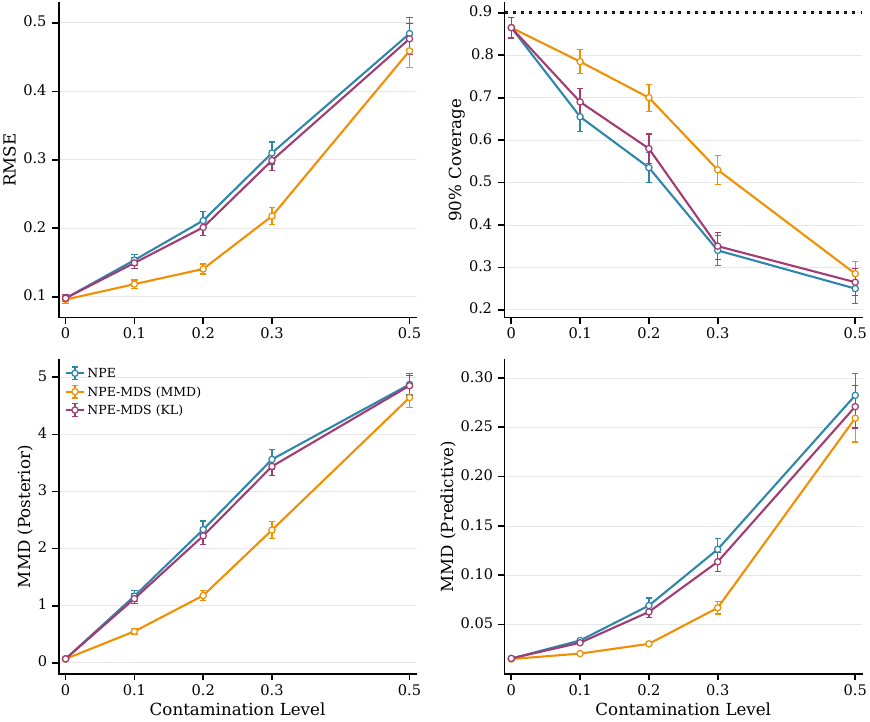}
  \caption{Comparison between KL-based MDS and MMD-based MDS, for increasing contamination levels.}
  \label{fig:app-1.1}
\end{figure*}  

\begin{figure*}[!htbp]
  \centering
  \includegraphics[width=0.6\textwidth]{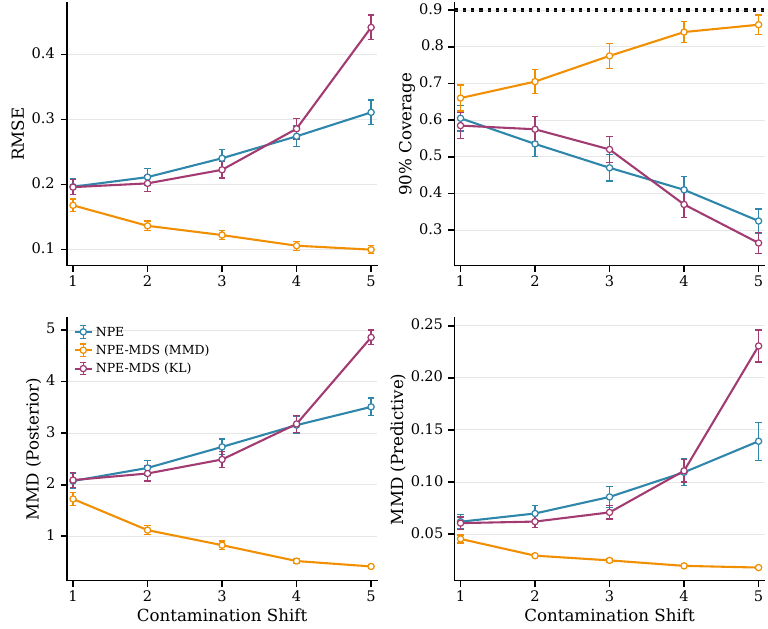}
  \caption{Comparison between KL-based MDS and MMD-based MDS, for increasing contamination shift.}
  \label{fig:app-1}
\end{figure*}

\section{Detecting Model Misspecification}
\label{app:detect_misspec}
The MMD was initially motivated as a two-sample test \citep{grettonKernelTwoSampleTest2012} and has been exploited to detect model misspecification in previous SBI work \citep{schmittDetectingModelMisspecification2024a}. We propose a simple procedure to detect model misspecification using our trained decoder mean embedding \( \hat{\mu}_\omega(\mathbf{s}) \). We do this by calibrating a null distribution with a false positive rate \( \alpha \), on held-out validation data pairs \( (\mathbf{x}_{1:N} , \mathbf{s}) \), specifically, by calculating the test statistics \( \hat{\mathrm{MMD}} = \| \mu_\omega(\mathbf{s}) - \frac{1}{N} \sum_{n=1}^N \mathbf{z}(\mathbf{x}_n) \|^2_2 \), and taking the \( 1-\alpha \) quantile of this test statistics distribution, \( \tau \). Then, we only proceed with the adaptation for an observed dataset \( \tilde{\mathbf{x}} \) if \( \| \hat{\mu}_\omega(\mathbf{s}_0) - \hat{\mu}_{\mathrm{obs}} \|^2_2 > \tau \), where \( \mathbf{s}_0 =  f(\tilde{\mathbf{x}})  \). 

We provide empirical results showing the empirical Type-I error and power of the model misspecification procedure in Figures~\ref{fig:app-2} and \ref{fig:app-3} using the Gaussian setup in Appendix~\ref{app:gauss}. In particular, note that the left plot of Figure~\ref{fig:app-3} confirms the sudden drop result discussed in Section~\ref{exp:gaussian}, as this is due to the false negatives from the contamination magnitude not being large enough to trigger adaptation.

\begin{figure*}[!htbp]
  \centering
  \includegraphics[width=0.4\linewidth]{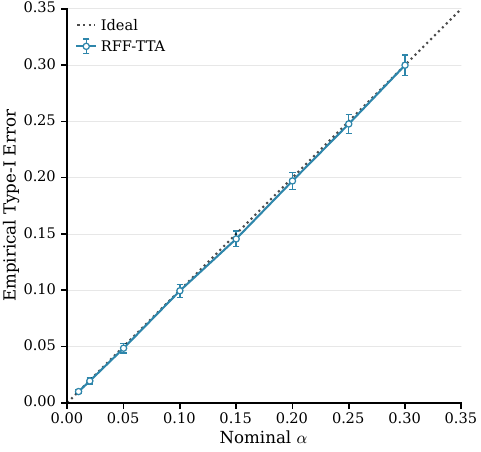}
  \caption{Empirical false positive rate for misspecification detection using the Gaussian task in Appendix~\ref{app:gauss}}
  \label{fig:app-2}
\end{figure*}  

\begin{figure*}[!htbp]
  \centering
  \includegraphics[width=0.7\linewidth]{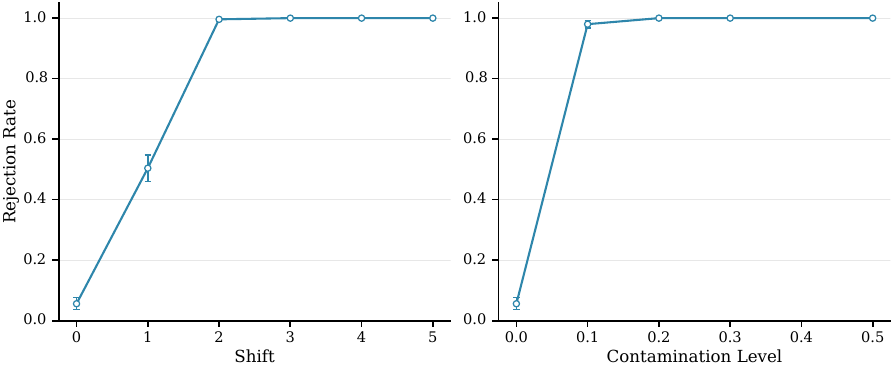}
  \caption{Rejection rate for misspecification detection using the Gaussian task in Appendix~\ref{app:gauss}, with increasing outlier shift and contamination proportion}
  \label{fig:app-3}
\end{figure*}  

\section{Ablation Studies}
\label{app:gauss-ablation}
In this section, we provide a series of ablation studies to test the robustness of our methods in a range of experimental settings. 

For section~\ref{app:sample-size-ablation} and \ref{app:dimension-ablation}, we test the sensitivity of our proposed MDS approach against increasing number of observations \( N \) and observation dimension \( d_\mathbf{x} \), under a Gaussian model setting as specified in Appendix~\ref{app:gauss}.

For section~\ref{app:optimization-sensitivity} and \ref{app:kernel-sensitivity}, we investigate the sensitivity of our optimization initialization and kernel bandwidth, using the OUP model as specified in Appendix~\ref{app:oup}.

\subsection{Sample Size Ablation}
\label{app:sample-size-ablation}
We use broadly the same setup as in Appendix~\ref{app:gauss}, but changing the number of observations \( N \in \{5, 10, 50, 100, 200, 500\}\). For the misspecification, we use \( \delta = 5, \epsilon=0.2\). We furthermore use directly the sample mean as the summary statistic. The results are provided in Figure~\ref{fig:app-9}. We find that overall, NPE-MDS obtains better robustness compared to both NPE and NNPE, although the improvement increases as \( N \) increases.

\begin{figure*}[!htbp]
  \centering
  \includegraphics[width=0.8\textwidth]{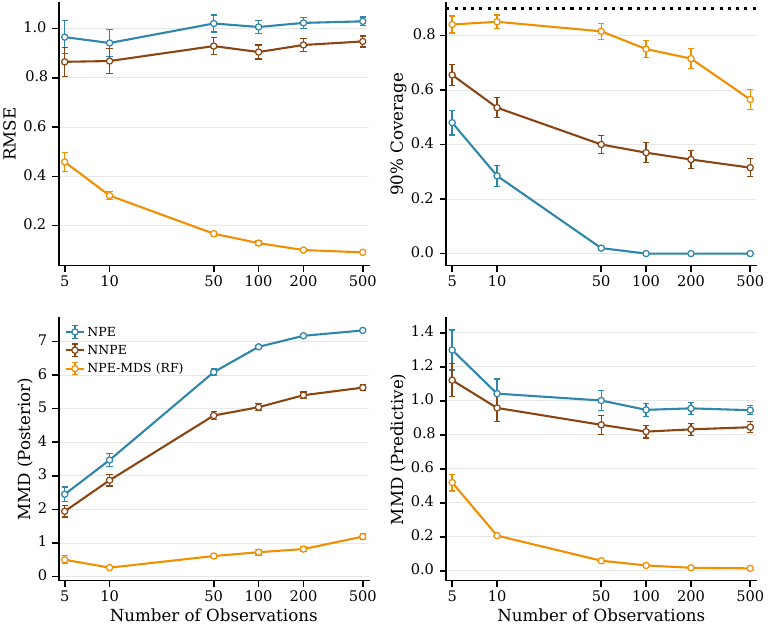}
  \caption{Results for ablation with increasing number of observations (Appendix~\ref{app:gauss-ablation})}
  \label{fig:app-9}
\end{figure*}

\subsection{Data Dimension Ablation}
\label{app:dimension-ablation}

As we would like to investigate the sensitivity of our method to increasing observation dimension, increasing the dimensions of the Gaussian task directly would not be suitable as that would lead to an increase in the parameter and importantly, the summary statistic dimension. In order to keep both these variables fixed, we propose a slight alteration to the Gaussian task, using a Gaussian factor model \( \mathbf{x}_i \mid \theta \sim N(A\theta, I) \), where \( A \in \mathbb{R}^{D \times 2}\) is a random projection matrix. We then use as a summary statistic \(A^+ \cdot \bar{\mathbf{x}}_{1:N} \). This allows us to change the dimensionality of the observations \( D \in \{2,5,10,50,100\} \) while keeping both the parameter and summary statistic dimension fixed at the original setting of \( 2 \). For the misspecification, we use \( \delta = 5, \epsilon=0.3\). The results for this ablation study is provided in Figure~\ref{fig:app-10}. We find that, while the NPE-MDS overall still provides improvement on the robustness compared to both the NPE and NNPE, this gap decreases as the observation dimension increases. 

\begin{figure*}[!htbp]
  \centering
  \includegraphics[width=0.8\textwidth]{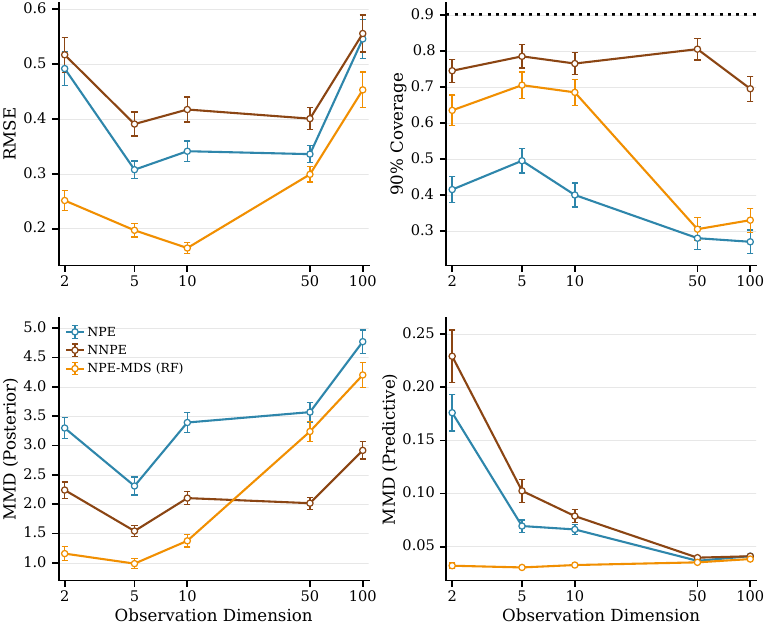}
  \caption{Results for ablation with increasing number of dimensions (Appendix~\ref{app:gauss-ablation})}
  \label{fig:app-10}
\end{figure*}

\subsection{Optimization Sensitivity Ablation}
\label{app:optimization-sensitivity}

For the test-time optimization procedure described in section~\ref{sec:mmd-mds}, we used L-BFGS in our experiments, which was initialized by default at the observed summary \(\mathbf{s}_0=f(\tilde{\mathbf{x}}_{1:N})\). Since equation~\eqref{eq:mmd-rff} is a deterministic optimization, we consider its sensitivity to initialization. 

With reference to the left plot in Figure~\ref{fig:app-10.1}, we compare the resulting posterior RMSE for four difference initialization schemes. First, using the default observed summary \(\mathbf{s}_0=f(\tilde{\mathbf{x}}_{1:N})\), the observed summary but with Gaussian perturbation, the mean of the simulated training summary statistics, and finally, the zero vector. We observe that, broadly speaking, the mean of the simulated training samples is a better initialization choice measured by the resulting posterior RMSE. This indicates that the optimal minimum-distance summary is likely closer, or more easily reached by the uncontaminated training sample than with the contaminated observed summary, for this particular model and contamination. We further see that as the contamination level increases, this gap widens, which can be interpreted as being due to the additional contamination making the optimization landscape more challenging. We see this as well in the midde plot for Figure~\ref{fig:app-10.1}, which shows the generally increasing standard deviation of the summary distance between the estimated MDS and the oracle summary, using the Gaussian-perturbed default observed summary statistic. While this indicates that our choice of the observed summary statistic as the initialization is not optimal, this result may be dependent on the exact model and contamination setting.  

To further investigate how the initialization can be improved, in the right plot of Figure~\ref{fig:app-10.1}, we compare the posterior RMSE using the default single initialization (and optimization procedure) with 10 initializations (and thus 10 resulting optimization procedure). We use the default observed summary statistic, and the 10 initializations are obtained by Gaussian perturbations of this observed summary statistic. We then choose the best resulting MDS from the 10 different optimizations. We observe that while this is a naive approach, it does provides benefits over the default optimization procedure, especially in higher contamination levels. Furthermore, since our optimization is lightweight and runs relatively quickly, the additional computational costs of running 10 optimizations is small (particularly, compared to the training of the decoder mean embedding). Hence, using the multiple restart optimization may be a simple, but powerful approach to improving the optimization procedure.

\begin{figure*}[!htbp]
  \centering
  \includegraphics[width=\linewidth]{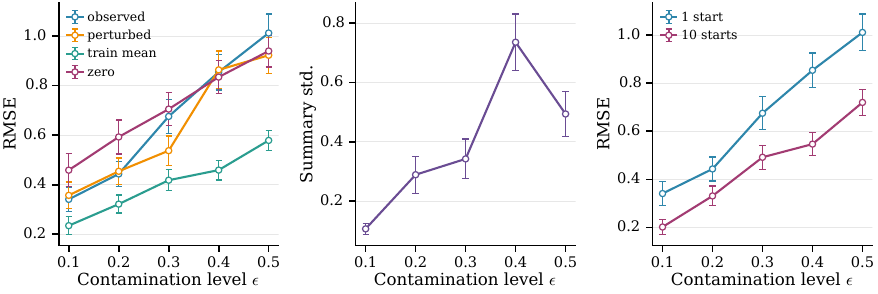}
  \caption{Initialization sensitivity of test-time L-BFGS optimization on OUP model Left: posterior RMSE under four initialization settings 
  Middle: standard deviation across Gussian-perturbed initialization, of the summary distance between optimized MDS and oracle summary
  Right: posterior RMSE comparing MDS optimization with single initialization and multiple restart initialization}
  \label{fig:app-10.1}
\end{figure*}  

\subsection{MMD Kernel Bandwidth Ablation}
\label{app:kernel-sensitivity}

The MDS objective in Equation~\eqref{eq:mds_def} depends on the maximum-mean discrepancy, which we use with a Gaussian kernel. It is well-known that the choice of kernel bandwidth is crucial in determining the effectiveness of the Gaussian kernel and the MMD. 

To evaluate the sensitivity of the downstream posterior accuracy to the choice of the kernel bandwidth, we scale the default bandwidth choice (which uses the median heuristic) with a multiplicative factor $k$, i.e., $\sigma_k = k \times \sigma_{\mathrm{med}}$. With reference to Figure~\ref{fig:app-10.2}, we find that the median heuristic (corresponding to $k=1$) is a reliable default choice for our downstream MDS use, although a natural extension would be to select the bandwidth based on a held-out validation set.

\begin{figure*}[!htbp]
  \centering
  \includegraphics[width=0.7\linewidth]{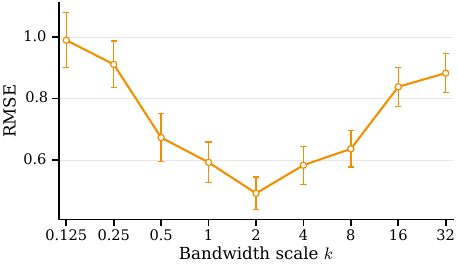}
  \caption{Kernel bandwidth sensitivity with the OUP model ($\varepsilon=0.2$)}
  \label{fig:app-10.2}
\end{figure*}  

\section{NPE with Tabular Foundation Models }
\label{app:gauss-pfn}
NPE-PFN \citep{vetterEffortlessSimulationEfficientBayesian2025b} provides an alternate approximation to the Bayesian posterior distribution based on probabilistic foundation  models, performing Bayesian inference through in-context learning. In this setting, as we would like to ideally avoid modifying the pretrained NPE-PFN directly, the test-time paradigm is a natural fit, for which the MDS simply adapts the query summary for robustness. We use the same Gaussian task setup as described in Appendix~\ref{app:gauss}. The results are provided in Figures~\ref{fig:app-11}, \ref{fig:app-12}, \ref{fig:app-13} and \ref{fig:app-14}, and we find that the MDS can provide robustness benefits in this setting.

\begin{figure*}[!htbp]
  \centering
  \includegraphics[width=0.8\textwidth]{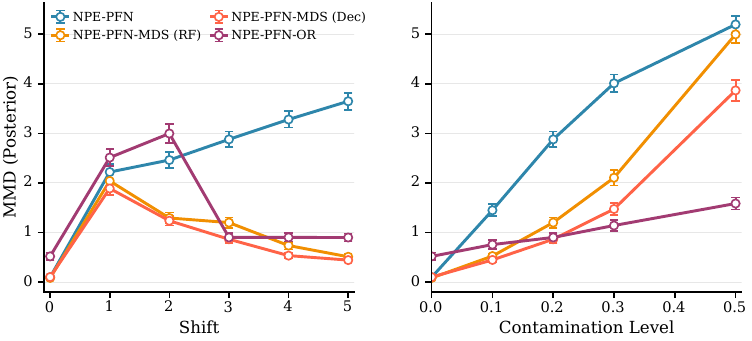}
  \caption{MMD between estimated posterior (NPE-PFN) and reference true posterior for Gaussian task (Appendix~\ref{app:gauss-pfn})}
  \label{fig:app-11}
\end{figure*}

\begin{figure*}[!htbp]
  \centering
  \includegraphics[width=0.8\textwidth]{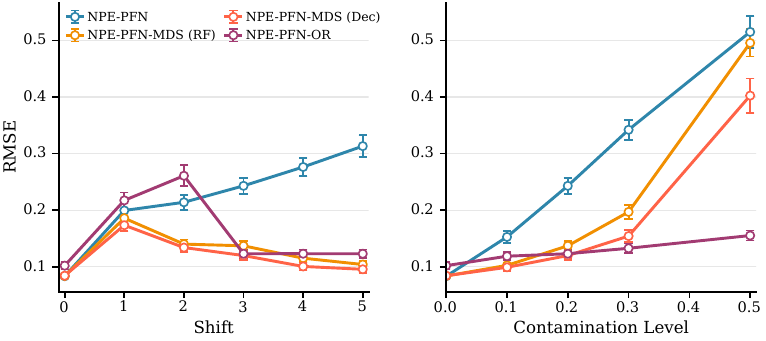}
  \caption{RMSE between estimated posterior means (NPE-PFN) and true parameter for Gaussian task (Appendix~\ref{app:gauss-pfn})}
  \label{fig:app-12}
\end{figure*}

\begin{figure*}[!htbp]
  \centering
  \includegraphics[width=0.8\textwidth]{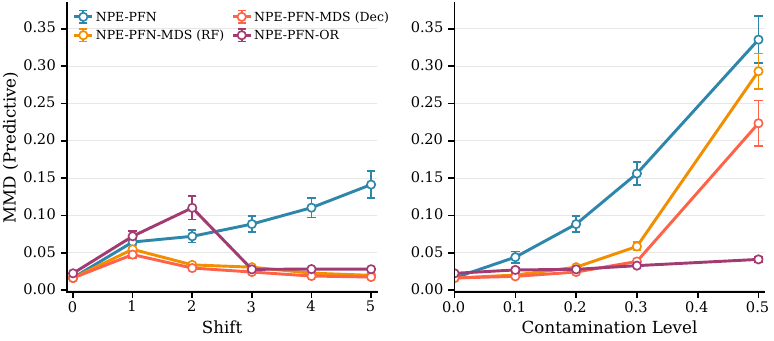}
  \caption{MMD between estimated posterior predictive (NPE-PFN) and uncontaminated test dataset for Gaussian task (Appendix~\ref{app:gauss-pfn})}
  \label{fig:app-13}
\end{figure*}

\begin{figure*}[!htbp]
  \centering
  \includegraphics[width=0.8\textwidth]{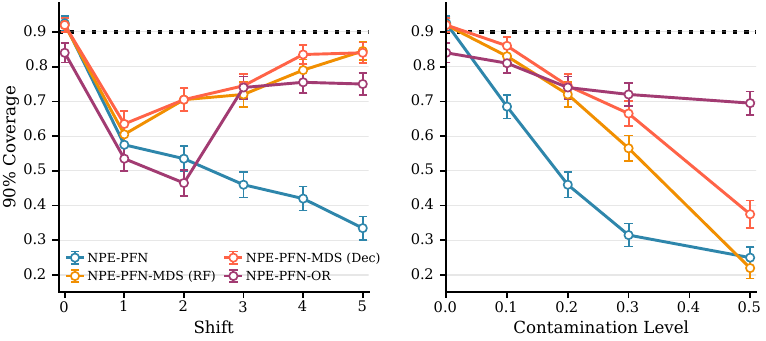}
  \caption{Coverage of estimated posterior (NPE-PFN) (Appendix~\ref{app:gauss-pfn})}
  \label{fig:app-14}
\end{figure*}

\section{Further Experimental Details and Results}
\label{app:exp}
In this section, we provide further details and additional results, based on the experiments discussed in Section~\ref{sec:exp}. 

In all our experiments, we require the training of conditional density estimator for neural posterior estimators and the decoder model proposed in our MDS approach. We use the \texttt{sbi} package \citep{BoeltsDeistler_sbi_2025} for the training of the required conditional density estimators. For both the NPE and decoder model, a neural spline flow \citep{durkanNeuralSplineFlows2019a} was used as the density estimator, with default configurations from \texttt{sbi}. The Adam \citep{kingmaAdamMethodStochastic2015} optimizer was used with a learning rate of \( 5 \times 10^{-4}\) and the density estimators was trained until convergence.

For the MDS algorithm with random Fourier features, the feature approximations corresponding to the RBF kernel were obtained with the \texttt{scikit-learn} package \citep{pedregosaScikitlearnMachineLearning2011}, using the standard median heuristic for the tuning of the bandwidth hyperparameter, and a fixed dimension of \( 512 \) for the dimension of the feature space. This feature approximation is used to obtain empirical mean embeddings of training data, which we subsequently use to train a standard fully-connected neural network with \( 2 \) hidden layers with \( 256 \) units per layer, giving us an amortized conditional mean embedding estimate with respect to \( \mathbf{s} \). During test-time, we use the L-BFGS optimization algorithm \citep{liuLimitedMemoryBFGS1989} with line search as implemented in the \texttt{PyTorch} package \citep{paszkePyTorchImperativeStyle2019} to obtain the adapted MDS in Equation~\ref{eq:mmd-rff}. For the decoder model variant of the MDS algorithm, after training of the amortized decoder model with training data, we use the Adam optimizer \citep{kingmaAdamMethodStochastic2015} to directly solve an empirical estimate of Equation~\ref{eq:mds_def} by drawing samples from the decoder model. As discussed in Appendix~\ref{app:detect_misspec}, we reserve 5\% of the training data for calibration, which is not used in the pretraining of the conditional mean embedding or decoder model.

For the Noisy NPE of \citet{wardRobustNeuralPosterior2022a}, the spike-and-slab error model was used. Following the notation of \( \mathbf{y} \) as the summary statistic and \( \mathbf{x} \) as the unobserved latent variable, the error model is (see Equation 8 of \citet{wardRobustNeuralPosterior2022a}):
\[
p(\mathbf{y} \mid \mathbf{x} ) = \prod_{j=1}^D [(1-\rho) N(x_j, \sigma^2) + \rho \; \mathrm{Cau}(x_j, \tau)]
\]
We set \( \rho = 1\), \( \sigma = 0.01 \) and \( \tau = 0.2 \). As we use the NNPE variant of the algorithm, we directly incorporate this to the simulator model, and train an amortized NPE directly. The NPE-RS method of \citet{huangLearningRobustStatistics2023g} was trained by augmenting the standard NPE negative log-likelihood loss function with a regularization term, \( \lambda \cdot \mathrm{MMD}^2[f_\phi(\mathbf{x}_{1:N}), f_\phi(\tilde{\mathbf{x}}_{1:N})] \), where \( \mathbf{x}_{1:N}, \tilde{\mathbf{x}}_{1:N} \) are training and observed samples respectively. In particular, this requires use of a neural summary statistic \( f_\phi \) in order to minimize the regularization term. We use grid search of \( \lambda \in \{0.1, 1.0,5.0, 10.0\} \), and provide the results for the best performing model for each metric. For the OC-SVM \citep{scholkopfEstimatingSupportHighDimensional2001} approach, we fit a one-class SVM using \texttt{scikit-learn} \citep{pedregosaScikitlearnMachineLearning2011} on training data using the Gaussian kernel with median heuristic, calibrate a threshold using a false positive rate of 5\%, mimicking the approach of the MDS, and remove outliers based on the OC-SVM. We then resample from the remaining inliers to maintain the same fixed observations. 

In all our tasks, we use datasets \( \mathbf{x}_{1:N} \) which \( N= 100\), and 100 different test datasets, each corresponding to a true parameter \( \theta^* \). The randomness of the error bars in all plots are with respect to the 100 test datasets. For parameter estimation accuracy, we calculate the RMSE over the parameter dimensions, \( \sqrt{\frac{1}{d_\theta} \| \bar{\theta} - \theta^* \|^2_2} \), where \( \bar{\theta} \) is the posterior mean from the posterior samples for each method. For the calibration of the posterior, we calculate the coverage, by estimating the credible intervals \( L_j = Q_{\alpha/2}(\{\theta_j^{(s)}\}_{s=1}^M), U_j  = Q_{1-\alpha/2}(\{\theta_j^{(s)}\}_{s=1}^M)\) using posterior samples, and obtain coverage \( \mathbf{1}[L_j \leq \theta_j^* \leq U_j] \), which is then averaged over the parameter dimensions. We also obtain posterior predictive metric by comparing the posterior predictive distribution, estimated by resampling from the obtained posterior samples with the simulator model, against the clean, uncontaminated test dataset. Since the MDS approach operates entirely on the summary-space and original data-space, it is natural to compare against the oracle summary, which is the summary statistic of the clean, uncontaminated test dataset, which we do by comparing the Eucliean distance against the obtained MDS. Finally, where possible, such as with the Gaussian task, we compare with the reference posterior directly using the MMD.

\subsection{Gaussian Task}
\label{app:gauss}

In this section, we discuss the experimental setup for Section~\ref{exp:gaussian}, as well as additional results and experiments based on this setup.

The Gaussian task is based on the following model:

{\setlength{\extrarowheight}{5pt}
\begin{tabularx}{\textwidth}{@{}ll@{}}
    \textbf{Prior} &
        $\theta \sim \mathcal{N}_d(\mathbf{0},\, I_d)$
        \\
    \textbf{Simulator} &
        $\mathbf{x}_{1:N} = (\mathbf{x}_1,\ldots,\mathbf{x}_{N})$,\,
        $\mathbf{x}_i \mid \theta \sim \mathcal{N}_d(\theta,\, I_d)$,\,
        $i = 1,\ldots,N$
        \\
    \textbf{Summary statistic} &
        $f_\phi:\mathbb{R}^{N \times d} \to \mathbb{R}^d$
        \\
    \textbf{Dimensionality} &
        $\theta \in \mathbb{R}^{d}$,\,
        $\mathbf{x}_i \in \mathbb{R}^{d}$,\,
        $\mathbf{x}_{1:N} \in \mathbb{R}^{N\times d}$ \\
\end{tabularx}}

This provides a closed-form conjugate posterior distribution, which we use to provide an exact measure of the accuracy of our posterior approximation. Note that, for this task, we use a trained neural summary statistic \( f_\phi \) (parameterized as a fully-connected neural network), which is learned jointly with the NPE, which allows us to use the NPE-RS approach. We use \( 50000\) training datasets, and \( 100 \) test datasets for evaluation. We use \( d=2, N=100 \) unless stated otherwise.

\paragraph{Misspecification} For a test dataset \( \mathbf{x}_{1:N} \) generated from true parameter \( \theta^* \), each observation is independently replaced by a shifted outlier with probability $\epsilon$: $\tilde{\mathbf{x}}_i = \mathbf{x}_i$ w.p. $1-\epsilon$, and $\tilde{\mathbf{x}}_i = z_i\cdot \delta$ w.p. $\epsilon$, where $z_i \sim \mathrm{Unif}\{-1,+1\}$.
where $\delta>0$ controls the outlier magnitude. We vary $\epsilon \in \{0.1,0.2,0.3,0.5\}$ with fixed magnitude $\delta=3$, and vary $\delta \in \{1,2,3,4,5\}$ with fixed $\epsilon=0.2$. This misspecification is based on Experiment 2 of \citet{elsemullerDoesUnsupervisedDomain2025c}.

In addition to the plot provided in Figure~\ref{fig:main-3}, we provide additional plots below. Furthermore, we show the resulting MDS posterior adaptation from four different test datasets in Figure~\ref{fig:app-8}.

\begin{figure}[!htbp]
  \centering
  \includegraphics[width=0.8\textwidth]{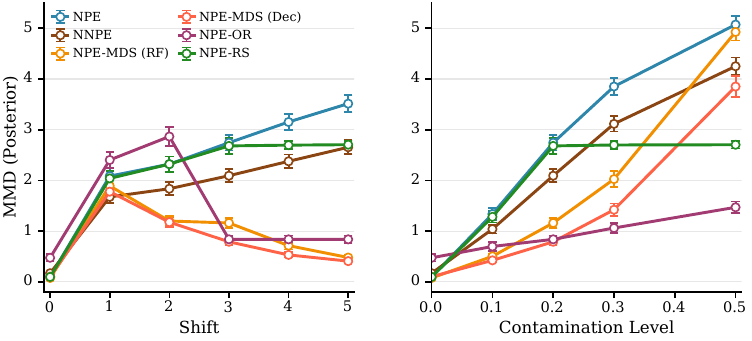}
  \caption{MMD between estimated posterior and reference true posterior for Gaussian task (Appendix~\ref{app:gauss})}
  \label{fig:app-4}
\end{figure}  

\begin{figure}[!htbp]
  \centering
  \includegraphics[width=0.8\textwidth]{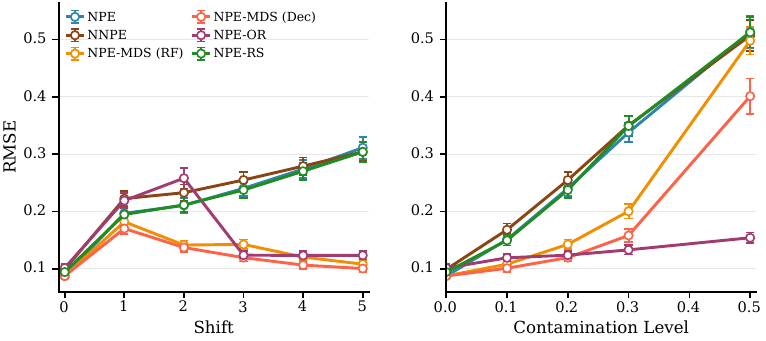}
  \caption{RMSE between estimated posterior means and true parameter for Gaussian task (Appendix~\ref{app:gauss})}
  \label{fig:app-5}
\end{figure}

\begin{figure}[!htbp]
  \centering
  \includegraphics[width=0.8\textwidth]{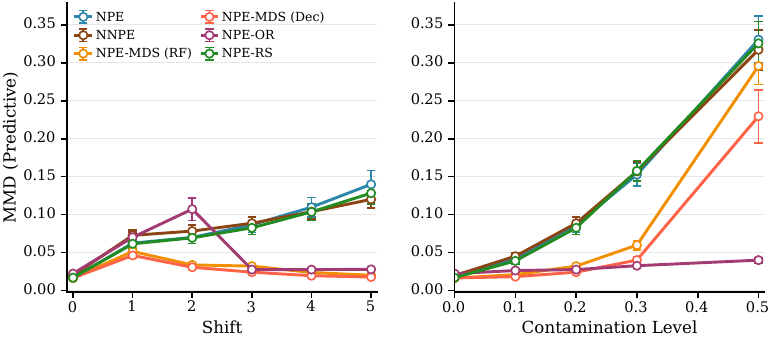}
  \caption{MMD between estimated posterior predictive and uncontaminated test dataset for Gaussian task (Appendix~\ref{app:gauss})}
  \label{fig:app-6}
\end{figure}

\begin{figure}[!htbp]
  \centering
  \includegraphics[width=0.8\textwidth]{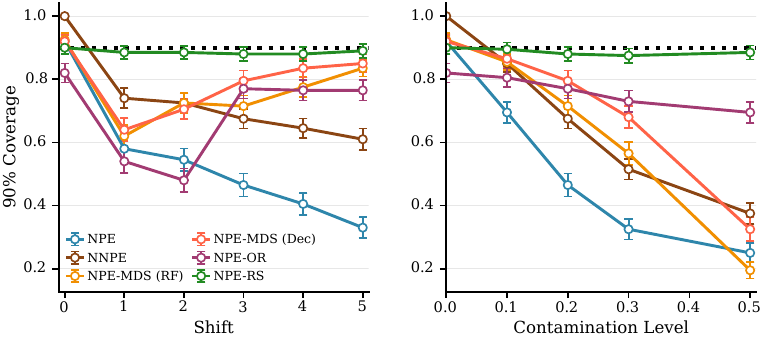}
  \caption{Coverage of estimated posterior for Gaussian task (Appendix~\ref{app:gauss})}
  \label{fig:app-7}
\end{figure}

\begin{figure}[!htbp]
  \centering
  \includegraphics[width=0.8\textwidth]{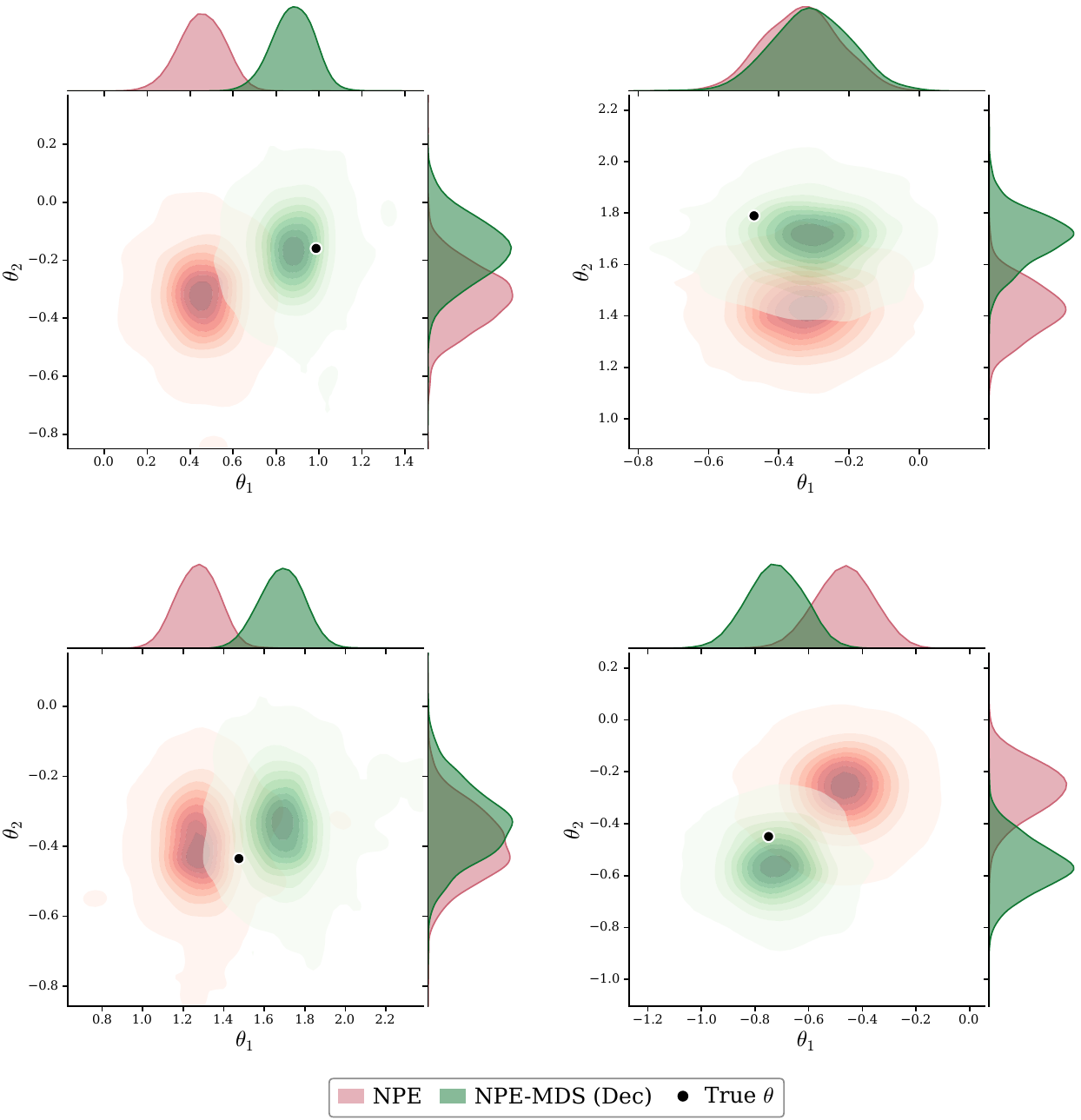}
  \caption{Posterior distributions before and after MDS adaptation for Gaussian task, for four test datasets (Appendix~\ref{app:gauss})}
  \label{fig:app-8}
\end{figure}

\subsection{OUP Task}
\label{app:oup}
The Ornstein--Uhlenbeck process (OUP) is defined by a mean-reverting stochastic differential equation, and is based on the experiment provided in \citet{huangLearningRobustStatistics2023g}. One key distinction is that we use a fixed summary statistic, instead of a learned neural summary statistic that was done in their setup. The summary statistic \( (s_1, s_2, s_3) \) is the mean, variance, and lag-1 autocorrelation respectively, which is averaged over the trajectories and datasets. We use \( T = 25 \) and \( N = 100 \) trajectories per dataset. The diffusion variance is set at \( \sigma^2 = 0.1 \), and initial conditions was set starting from \( X_0 = 10 \), and the SDE was discretized with a standard Euler-Maruyama scheme. We use \( 10000 \) training datasets for the NPE training.

Contamination was induced by replacing a fraction \( \epsilon \) of trajectories with contaminated trajectories, generated from a contaminated parameter of \( \theta = (-0.5, 1.0)\) and with \( \sigma^2 = 0.5 \). 

{\setlength{\extrarowheight}{5pt}
\begin{tabularx}{\textwidth}{@{}lX@{}}
\textbf{Prior} &
  $(\theta_1, \theta_2) \sim \mathrm{U}[0,2] \times \mathrm{U}[-2,2]$
  \\
\textbf{Simulator} &
  $d X_t = \theta_1 [\exp(\theta_2) - X_t] dt + \sigma \; dW_t$,
  \\
\textbf{Summary statistic} &
\begin{minipage}[t]{\linewidth}
  $\mathbf{s}(\mathbf{x}_{1:N}) = (s_1,s_2,s_3)$, where $\mathbf{x}_{1:N} = \mathbf{X}\in\mathbb{R}^{N\times T}$:\par
  \noindent \(
    \begin{aligned}
    s_1 &= \frac{1}{NT}\sum_{n=1}^{N}\sum_{t=1}^{T} \mathbf{X}_{n,t}\\
    s_2 &= \frac{1}{NT}\sum_{n=1}^{N}\sum_{t=1}^{T}\left(\mathbf{X}_{n,t}- s_1\right)^2\\
    s_3 &= \mathrm{corr}\!\Big(\mathrm{vec}(\mathbf{X}_{:,1:T-1}),\ \mathrm{vec}(\mathbf{X}_{:,2:T})\Big).
    \end{aligned}
  \)
\end{minipage}
  \\
\textbf{Dimensionality} &
  $\theta \in \mathbb{R}^{2}$,\,
  $\mathbf{x}_i \in \mathbb{R}^{T}$,\,
  $\mathbf{x}_{1:N} \in \mathbb{R}^{N \times T}$
  \\
\end{tabularx}}

We provide the results and posteriors for the OUP task in Figures~\ref{fig:app-15} and \ref{fig:app-16}.

\begin{figure*}[!htbp]
  \centering
  \includegraphics[width=0.8\textwidth]{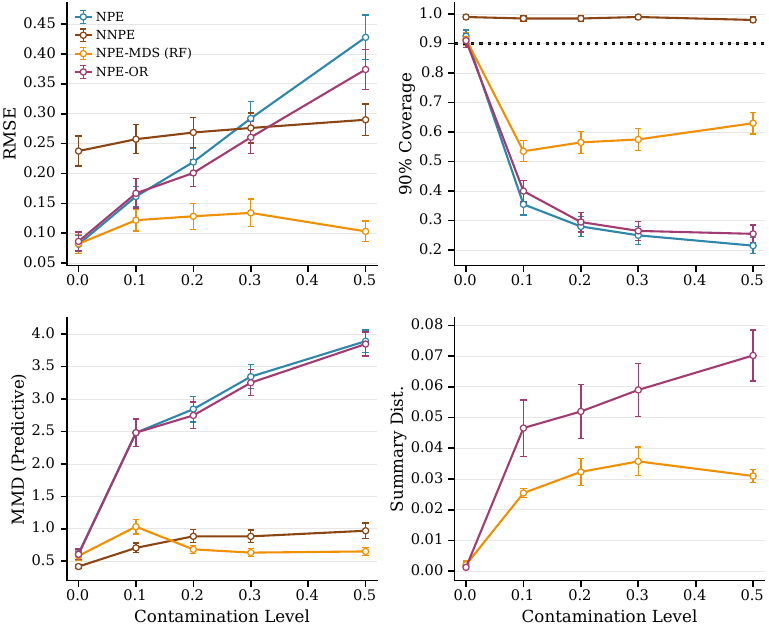}
  \caption{Results for OUP task (Appendix~\ref{app:oup})}
  \label{fig:app-15}
\end{figure*}

\begin{figure*}[!htbp]
  \centering
  \includegraphics[width=0.8\textwidth]{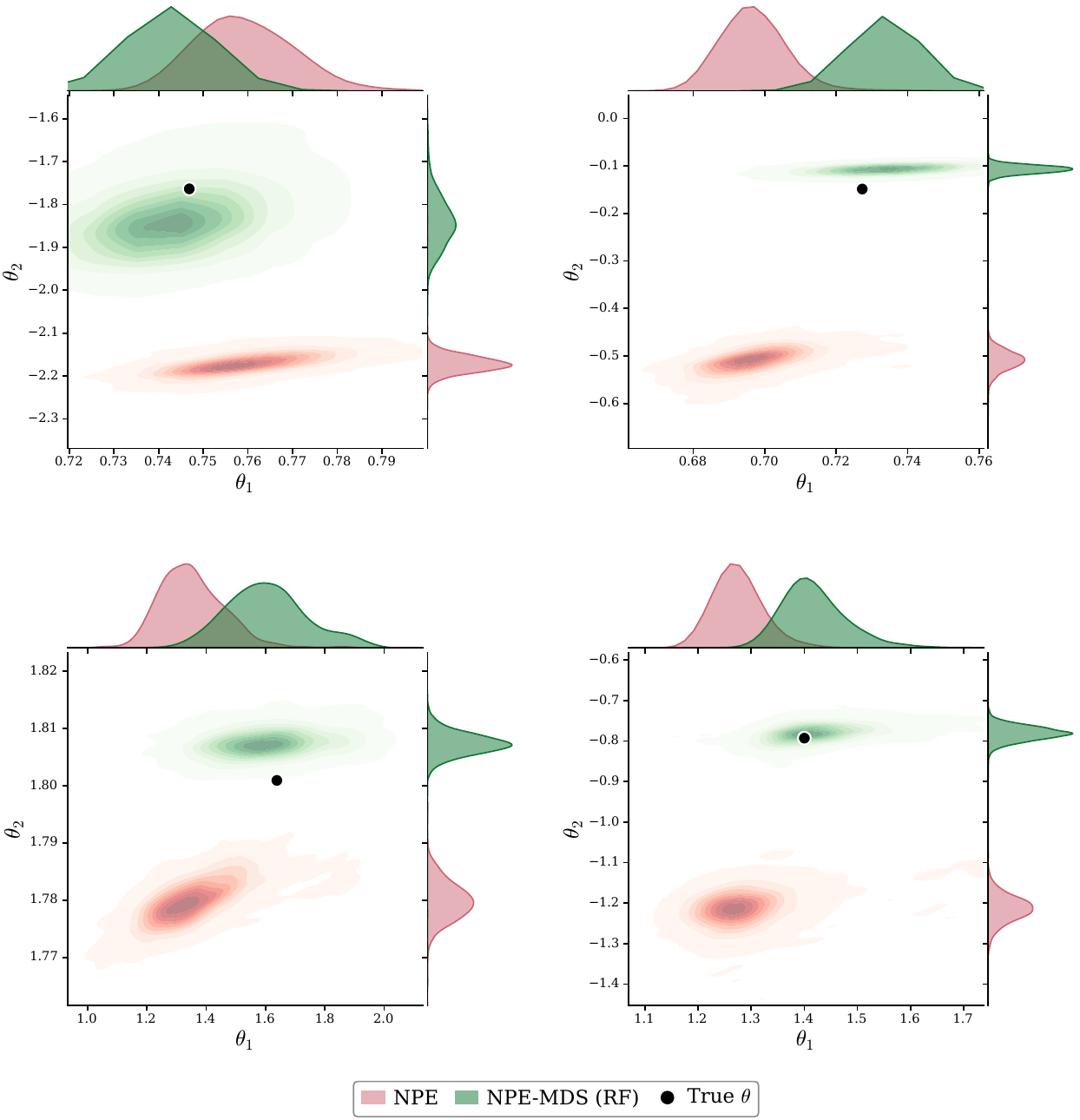}
  \caption{Posterior distributions before and after MDS adaptation for OUP task for four test dataset (Appendix~\ref{app:oup})}
  \label{fig:app-16}
\end{figure*}

\subsubsection{OUP with Learned Neural Summary Statistic}

In this section, we provide results for the OUP model, but with a learned encoder instead of a fixed summary statistic, which was used previously. Specifically, following the learned summary statistic procedure done in Appendix~\ref{app:gauss}, we first train an encoder $f_\phi(\mathbf{x})$ and the neural posterior estimator jointly in a standard NPE pipeline, before freezing both the encoder and the NPE for downstream use with the MDS, essentially replacing the fixed summary statistic with the frozen encoder.

With reference to Figure~\ref{fig:app-16.1}, we see that the MDS is able to preserve the robustness benefits over the standard NPE baseline in this learned summary statistic setting, similar to the fixed summary statistic setting discussed previously.

\begin{figure*}[!htbp]
  \centering
  \includegraphics[width=0.8\textwidth]{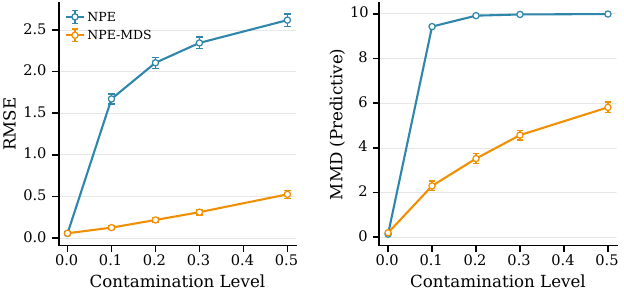}
  \caption{OUP model with learned summary statistic}
  \label{fig:app-16.1}
\end{figure*}

\subsection{SIR Task}
The SIR (Susceptible-Infected-Recovered) model is a classic epidemiological model which models the spread of infectious disease through a population. This setup is similar to \citet{wardRobustNeuralPosterior2022a}. This particular SIR variant is a stochastic variant, which has standard ODE dynamics for \( S, I, R\) and a mean-reverting diffusion process for the reproduction number \( R_0 \), which is coupled to the ODE through the effective transmission rate $\beta_{\text{eff}} = \gamma \cdot R_0$. In the diffusion process, for \( R_0 \), we have \( \bar{R}_0 = \beta/\gamma \) and set \( \sigma = \eta = 0.05 \) The system is initialized with $(S_0, I_0, R_0) = (0.999, 0.001, 0)$, and integrated with a standard Euler-Maruyama scheme over \( T=365 \) days, giving daily infection counts \( X_t = N \cdot I_t \) where the population \( N = 10 000\). We simulation \( 100 \) trajectories for each parameter, and the parameters to be inferred are the infection and recovery rate, \( \beta, \gamma \) respectively. We use \( 10000 \) training datasets for the NPE training.

Following \citet{wardRobustNeuralPosterior2022a}, for the summary statistic \( (S_1, \ldots, S_6) \), we take this to be the log mean infections, log median infections, log peak infections, log day of peak, log day when 50\% of cumulative infections are reached, and the lag-1 autocorrelation, which is averaged across trajectories.

Model misspecification is introduced through weekend underreporting, for a fraction \( \epsilon \) of the trajectories, weekend infection counts are undereported (by 5\%), and redistributed to the following Monday.

\label{app:sir}
{\setlength{\extrarowheight}{5pt}
\begin{tabularx}{\textwidth}{@{}lX@{}}
\textbf{Prior} &
  $(\beta,\gamma) \sim \mathrm{U}\big[\{(\beta,\gamma): 0 < \gamma < \beta < 0.5\}\big]$.
  \\
\textbf{Simulator} &
\begin{minipage}[t]{\linewidth}
  \noindent \(
    \begin{aligned}
    \frac{dS}{dt} &= -\beta_{\mathrm{eff}}(t) S I,\\
    \frac{dI}{dt} &= \beta_{\mathrm{eff}}(t) S I - \gamma I,\\
    \frac{dR}{dt} &= \gamma I,\\
    dR_0 &= \eta(\bar{R}_0 - R_0)dt + \sigma\sqrt{|R_0|}\,dW_t
    \end{aligned}
  \)
\end{minipage}
  \\
\textbf{Summary statistic} &
\begin{minipage}[t]{\linewidth}
  $\mathbf{S}(\mathbf{x}_{1:N})=(S_1,\ldots,S_6)$, where $\mathbf{x}_{1:N} = \mathbf{X}\in\mathbb{R}^{{N}\times T}$
\end{minipage}
  \\
\textbf{Dimensionality} &
  $\theta\in\mathbb{R}^2$,\,
  $\mathbf{x}_i\in\mathbb{R}^T$,\,
  $\mathbf{x}_{1:N}\in\mathbb{R}^{N \times T}$,\,
  $\mathbf{S}\in\mathbb{R}^6$.
  \\
\end{tabularx}}

We provide results and posteriors for the SIR task in Figures~\ref{fig:app-17} and \ref{fig:app-18}.

\begin{figure*}[!htbp]
  \centering
  \includegraphics[width=0.8\textwidth]{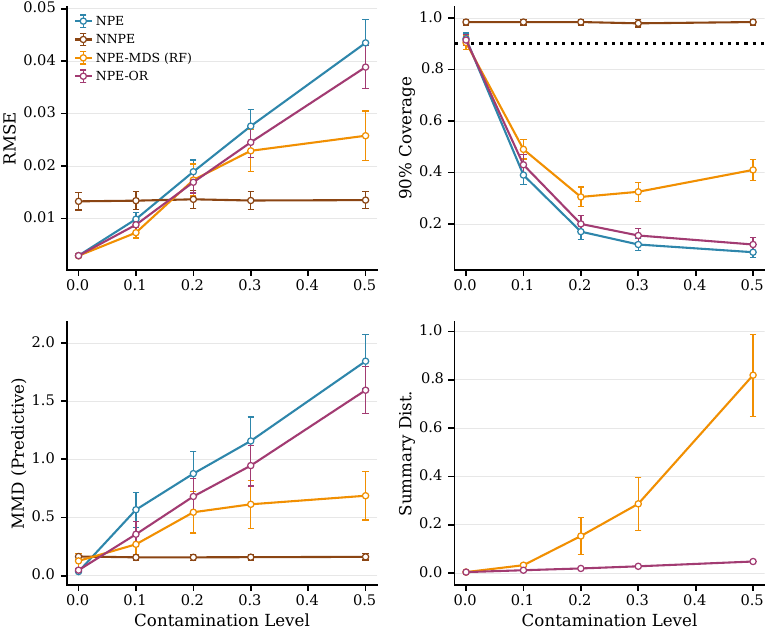}
  \caption{Results for SIR task (Appendix~\ref{app:sir})}
  \label{fig:app-17}
\end{figure*}

\begin{figure*}[!htbp]
  \centering
  \includegraphics[width=0.8\textwidth]{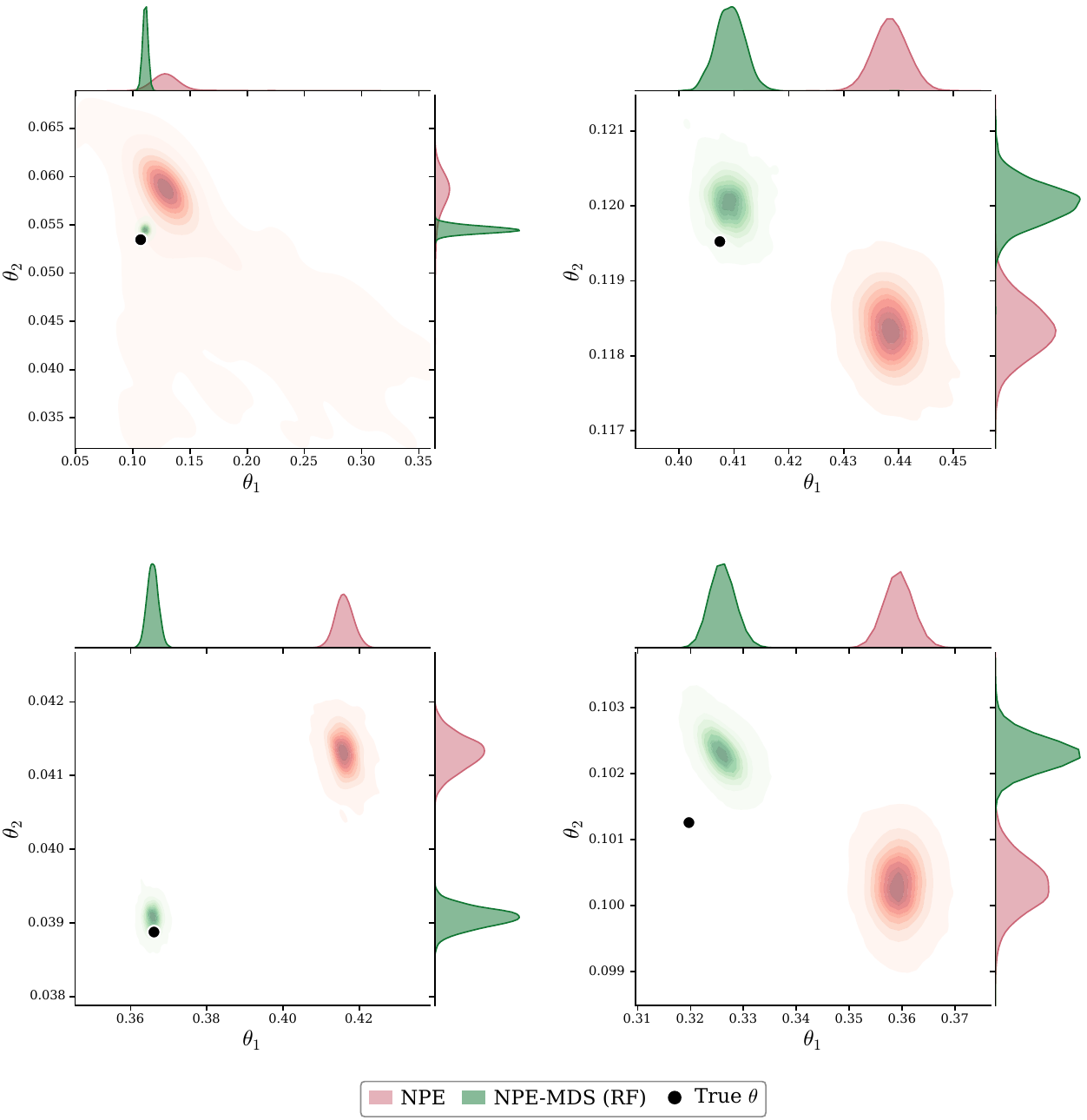}
  \caption{Posterior distributions before and after MDS adaptation for SIR task for four test dataset (Appendix~\ref{app:sir})}
  \label{fig:app-18}
\end{figure*}

\subsubsection{SIR under Further Misspecification}
\label{app:sir-structured-misspecification}
While the theoretical robustness guarantees provided in Section~\ref{sec:theory} is for the Huber contamination model and is typically interpreted with modest values of contamination ($\varepsilon < 0.5$), given that the MDS objective is not tied to outlier contamination specifically and instead adapts the summary statistic by matching the data distribution to the observed empirical distribution, this suggests that MDS may be able to provide robustness benefits beyond the Huber contamination model setting.

To evaluate this behavior, we consider further extending the SIR benchmark to larger contamination settings. Since the misspecification in this benchmark is structural, we can consider increasingly large fractions of contamination,  where at $\varepsilon = 1.0$, all observed trajectories are distorted by the weekend-reporting delay.

As we can see from Figure~\ref{fig:app-18.1}, MDS improves over the standard NPE baseline and NPE-OR, and the improvement continues to improve as the contamination fraction reaches $1.0$. Thus, this provides empirical evidence that MDS can retain robustness benefits even under structured misspecification.

\begin{figure*}[!htbp]
  \centering
  \includegraphics[width=0.8\textwidth]{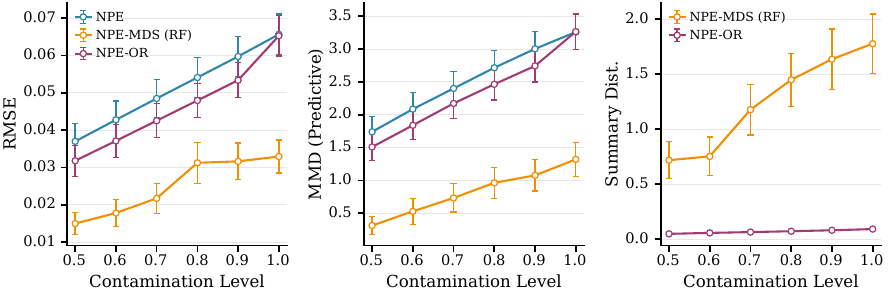}
  \caption{Results for SIR model under increasing misspecification}
  \label{fig:app-18.1}
\end{figure*}

\subsection{Cryo-EM Task}
\label{app:cryo-em}

Cryo-electron microscopy (cryo-EM) is an imaging technique that captures projected 2D images of 3D biomolecules, whose configuration (conformational state) varies across images. In particular, this problem setting is especially challenging due to the presence of nuisance variables, such as the orientation of the image and measurement noise. We refer the interested reader to \citet{dingeldeinAmortizedTemplateMatching2025, evansCryoEMImagesAre2025a} for further details about this problem setting. We use the HSP90 (Heatshock protein 90) model, provided as a benchmark in the \texttt{cryoSBI} package in \citep{dingeldeinAmortizedTemplateMatching2025}. For each conformational state \( \theta \), we simulate a \( 32 \times 32 \) image, and the goal is to estimate a conformational index \( \theta \) with prior as discrete uniform over \( K = 20 \) indices. As summary statistic, we compute the per-image skewness, kurtosis and intensity range, and aggregate this across the dataset of \( N = 100 \) images, and use the resulting mean and standard deviation, giving a \( 6 \) dimensional summary statistic. We use \( 15000 \) training datasets for the NPE training.

Model misspecification is induced by contaminating \(\epsilon \) proportion of the images in the dataset with pure Gaussian noise, which simulates measurement or equipment error that is common in realistic cryo-EM settings.

We provide results and posteriors for the cryo-EM task in Figures~\ref{fig:app-17} and \ref{fig:app-18}.

\begin{figure*}[!htbp]
  \centering
  \includegraphics[width=0.8\textwidth]{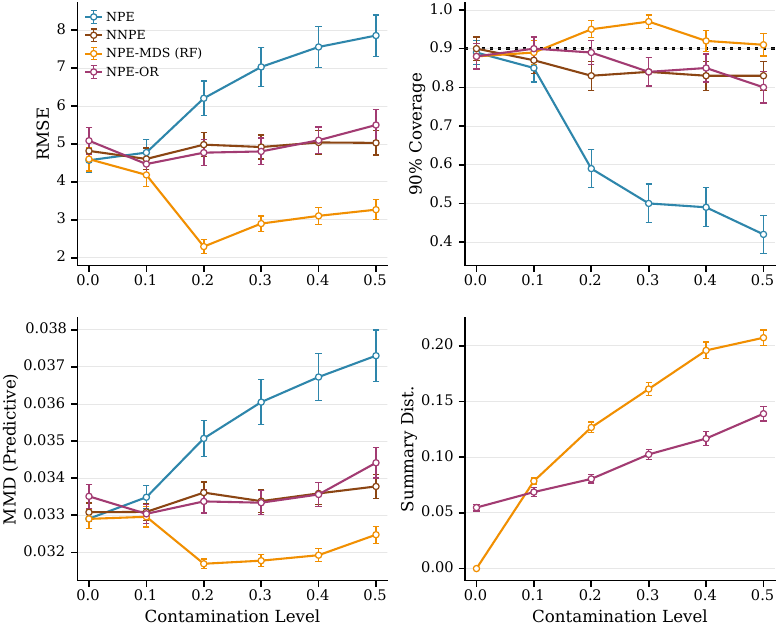}
  \caption{Results for cryo-EM task (Appendix~\ref{app:cryo-em})}
  \label{fig:app-19}
\end{figure*}

\begin{figure*}[!htbp]
  \centering
  \includegraphics[width=\textwidth]{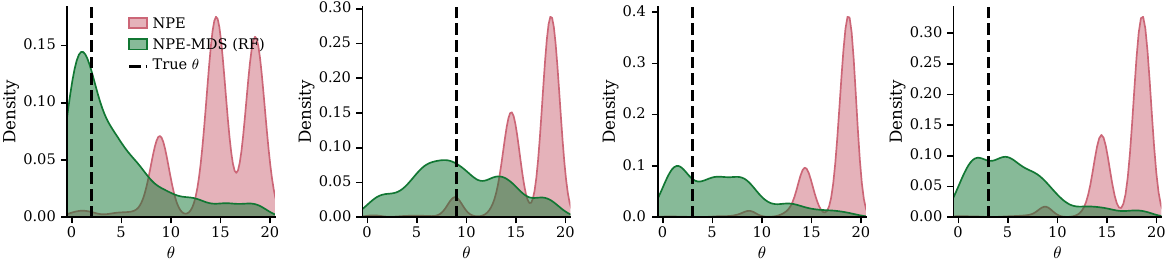}
  \caption{Posterior distributions before and after MDS adaptation for cryo-EM task for four test dataset (Appendix~\ref{app:cryo-em})}
  \label{fig:app-20}
\end{figure*}


\end{document}